\let\cline\cmidrule%
\def\eqref#1{equation~\ref{#1}}
\def\1{\bm{1}}
\def\eps{{\epsilon}}
\def\vf{{\bm{f}}}
\def\vh{{\bm{h}}}
\def\vs{{\bm{s}}}
\def\vw{{\bm{w}}}
\def\vx{{\bm{x}}}
\def\vz{{\bm{z}}}
\def\mD{{\bm{D}}}
\def\mS{{\bm{S}}}
\def\mT{{\bm{T}}}
\DeclareMathAlphabet{\mathsfit}{\encodingdefault}{\sfdefault}{m}{sl}
\SetMathAlphabet{\mathsfit}{bold}{\encodingdefault}{\sfdefault}{bx}{n}
\def\gA{{\mathcal{A}}}
\def\gH{{\mathcal{H}}}
\def\gN{{\mathcal{N}}}
\def\gO{{\mathcal{O}}}
\def\gV{{\mathcal{V}}}
\def\gX{{\mathcal{X}}}
\def\gZ{{\mathcal{Z}}}
\def\sN{{\mathbb{N}}}
\newcommand{\E}{\mathbb{E}}
\newcommand{\R}{\mathbb{R}}
\theoremstyle{thmstyleone}%
\newtheorem{theorem}{Theorem}
\theoremstyle{thmstyletwo}%
\newtheorem{remark}{Remark}%
\theoremstyle{thmstylethree}%
\newtheorem{definition}{Definition}%
\theoremstyle{plain}
\newtheorem{lemma}{Lemma}[section]
\newtheorem{assumption}{Assumption}[section]
\begin{document}

\title[Gentle Local Robustness implies Generalization]{Gentle Local Robustness implies Generalization}


\author*[1]{\fnm{Khoat} \sur{Than}}\email{khoattq@soict.hust.edu.vn}

\author[2]{\fnm{Dat} \sur{Phan}}\email{phandat12082002@gmail.com}

\author[1,3]{\fnm{Giang} \sur{Vu}}\email{lgv001@ucsd.edu}

\affil*[1]{\orgname{Hanoi University of Science and Technology}, 
\orgaddress{\state{Hanoi}, \country{Vietnam}}}

\affil[2]{\orgdiv{VinBigdata Institute}, \orgname{Vingroup}, \orgaddress{\state{Hanoi}, \country{Vietnam}}}

\affil[3]{\orgname{University of California}, \orgaddress{\city{San Diego}, \state{CA}, \country{USA}}}


\abstract{Robustness and generalization ability of machine learning models are of utmost importance in various application domains. There is a wide interest in efficient ways to analyze those properties. One important direction is to analyze connection between those two properties. Prior theories suggest that a robust learning algorithm can produce trained models with a high generalization ability. However, we show in this work that the existing error bounds are \textit{vacuous} for the Bayes optimal classifier which is the best among all measurable classifiers for a classification problem with overlapping classes. Those bounds cannot converge to the true error of this ideal classifier. This is undesirable, surprizing, and never known before.  We then present a class of novel bounds, which are \textit{model-dependent} and \textit{provably tighter} than the existing robustness-based ones. Unlike prior ones, our bounds are guaranteed to converge to the true error of the best classifier, as the number of samples increases. We further provide an extensive experiment and find that two of our bounds are often non-vacuous for a large class of deep neural networks, pretrained from ImageNet.}

\keywords{Model robustness, generalization ability, Error bound}



\maketitle

\section{Introduction}
Robust  learning algorithms \cite{xu2012robustnessGeneralize} can produce robust models which can resist small changes of data samples. Such an ability is crucial for modern applications, since non-robust models may face adversarial attacks \cite{madry2018towards,xu2020adversarial,zhou2022adversarial}.  A robust model not only can deal well with attacks but also can generalize well on unseen data.

In this work, we focus on analyzing the connection between robustness of a model and its generalization ability.  \citet{xu2012robustnessGeneralize} provided one of the very first theories to show that the models returned from a robust algorithm can generalize well on unseen data. Their robustness theory basically assumes that the learning algorithm must ensure a small deviation of the losses in areas around the training examples. This assumption is often known as \textit{algorithmic robustness}. This theory has been used in various contexts \cite{sokolic2017generalization,sokolic2017robustDNN,qi2013robustSVM,bellet2015robustnessMetricL,liu2017spectralClustering,li2021orthogonalDNN,shi2014sparse} where specific forms of robustness level ($\epsilon$) are provided. Recently, \citet{kawaguchi2022robustness} made a significant improvement in the uncertainty part which can bring  algorithmic robustness closer to practice. 

A major limitation of those algorithmic robustness-based theories is \textit{vacuousness}. For example,  for 0-1 loss,  an incorrect prediction of a  classifier can produce $\epsilon =1$ which equals robustness of the worst model. In practice, some incorrect predictions sometimes appear and may not be avoided, even for excellent models.  Theoretically, we show in subsection~\ref{sec-Vacuousness} that those theories are vacuous even for the Bayes classifier which is the best among all measurable classifiers for a classification problem with overlapping classes. This is undesirable.

Despite being really useful for evaluation and comparison between learning algorithms, those robustness-based bounds pose various difficulties to evaluate a specific model or compare two models. This fact limits the use of these error bounds in model selection. Furthermore, it is nontrivial \cite{xu2012robustnessGeneralize} to use those bounds to \textit{compare two models returned from different learning algorithms}, especially for stochastic algorithms  that are prevalent nowadays. These difficulties call for a novel model-dependent bound, which depends on a trained model only.

Our contributions in this work are as follows:
\begin{itemize}
\item We first point out the \textit{vacuousness} of the existing robustness-based bounds for the error of the best model among all measurable classifiers for a classification problem with overlapping classes. \textit{Those bounds cannot converge to the true error of the best model even for arbitrarily large number of training samples}. This is problematic and hence the use of those bounds to explain  generalization ability of an imperfect model is not well theoretically-justified.

\item We next present a novel class of error bounds, which are \textit{model-dependent}, by making a fine-grained analysis about local behaviors of a model at different small areas in the data space. \textit{Our bounds require no assumption on the model or learning algorithm}, but are  \textit{provably tighter} than previous  robustness-based  ones. 

\item For the best classifier, we show that our bounds converge to its true error as the number of samples increases. This suggests that our bounds resolve the major limitations of prior bounds and provide a significant step for the robustness approach.

\item We empirically compare those bounds on some real-life datasets and modern neural networks, and found that our bounds can reflect performance of a model better than the baselines. Furthermore, two of our bounds are often nonvacuous.
\end{itemize}

\textit{Roadmap:} The next section reviews the background about robustness-based bounds, discusses some of their issues, and closely related work. Section~\ref{sec-Local-behaviors-gen} presents our novel bounds and provides some theoretical comparisons. Section~\ref{sec-evaluation} presents our empirical evaluation, and Section~\ref{sec-Conclusion} concludes the paper. Details about experimental settings, proofs, and more experimental results appear in appendices.

\section{Backgrounds and related work} \label{sec-Backgrounds-related-work}

\textit{Notations:} A bold character (e.g., $\vz$) often denotes a vector, while a bold big symbol (e.g., $\mS$) often denotes a  set. Denote $\| \cdot \|$ as the $\ell_2$-norm. $| \mS |$ denotes the size/cardinality of a set $\mS$, and $[K]$ denotes the set $\{1, ..., K\}$ for a given integer $K \ge 1$. 

Consider a \textit{learning problem} specified by a model (or hypothesis) class $\gH$, an instance set $\gZ$, and a loss function $\ell: \gH \times \gZ  \rightarrow \mathbb{R}$. 
Given a distribution $P$ defined on $\gZ$, the quality of a model $\vh \in \gH$ is measured by its \textit{expected loss} $F(P, \vh) = \E_{\vz \sim P}[\ell(\vh,\vz)]$. In practice, we can collect a training set $\mS = \{\vz_1, ..., \vz_n\} \subseteq \gZ$ of size $n$ and often work with the \emph{empirical loss}  $F(\mS, \vh) =  \frac{1}{|\mS|} \sum_{\vz \in \mS} \ell(\vh,\vz)$. Quantity $F(P, \vh)$ tells the generalization ability of model $\vh$. A \textit{learning algorithm} $\gA$ will pick an $\gA_S \in \gH$ based on a given training set $\mS$.


Let $\Gamma(\gZ) := \bigcup_{i=1}^{K } \gZ_i$ be a partition of $\gZ$ into $K $ disjoint nonempty subsets. Denote $\mS_i = \mS \cap \gZ_i $, and $n_i = | \mS_i |$ as the number of samples  falling into $\gZ_i$, meaning that $n = \sum_{j=1}^K n_j$. Denote $\mT_S = \{ i \in [K ] : \mS \cap \gZ_i \ne \emptyset \}$.  Also denote $a_i(\vh) = \E_{\vz}[\ell(\vh,\vz) | \vz \in \gZ_i]$ for $i \in [K]$.

\subsection{ Robustness-based bounds}

When studying generalization ability of a  model $\vh$, it is natural to  consider  the expected loss $ F(P, \vh)$. However, an accurate estimation of  $ F(P, \vh)$ is highly challenging, especially for complex models. One well-known way is to study the training algorithm that produces $\vh$. 

Denote $\gA_S$ as the model (or hypothesis) which is learned by an algorithm $\gA$ from a training set $\mS$ with $n$ samples. \citet{xu2012robustnessGeneralize} defined the following.

\begin{definition}\label{def-algorithmic-robustness}
A learning algorithm $\gA$ is $(K, \epsilon)$-robust, for $ K \in \sN$ and $\epsilon: \gZ^n \rightarrow \R$, if $\gZ$ can be partitioned into $K$ disjoint sets, denoted by $\{\gZ_k\}^K_{k=1}$, such that the following holds for all $\mS \in \gZ^n: \forall \vs \in \mS, \forall \vz \in \gZ$, if $\vs, \vz \in \gZ_k$ for some index $k$, then $| \ell(\gA_S, \vs) - \ell(\gA_S, \vz) | \le \epsilon(\mS)$.
\end{definition}

Basically, algorithm $\gA$ is robust if every model learned by $\gA$ is robust on areas around the given training samples, according to a loss function $\ell$. This suggests that the trained model can generalize well on areas around training samples.  In order to formalize connection between  robustness of a learning algorithm and generalization of a trained model, we need the following assumption.

\begin{assumption}[Algorithmic robustness]\label{assumption-Alg-robust}
The learning algorithm $\gA$ is $(K, \eps)$-robust.
\end{assumption}

\citet{xu2012robustnessGeneralize} provided the following  bound about the expected loss of a  model learned by a robust algorithm.

\begin{theorem}[\cite{xu2012robustnessGeneralize}] \label{thm-Alg-robust-gen-Xu12}
Given Assumption~\ref{assumption-Alg-robust}, consider $\vh$ learned by algorithm $\gA$ from a dataset $\mS$ which consists of $n$ i.i.d. samples from distribution $P$, and a bounded loss $\ell$.  For any  $\delta >0$, denote $C_{\gH} = \sup_{\vf \in \gH, \vz \in \gZ} \ell(\vf,\vz)$ and $g_1(K, \mS, \delta) = C_{\gH} \sqrt{\frac{2 K \ln 2 - 2 \ln(\delta)}{n}}$. With probability at least $1-\delta$: 
\begin{equation}
\label{eq-thm-Alg-robust-gen-Xu12}
 F(P, \vh) \le g_1(K, \mS, \delta) + F(\mS, \vh)  + { \epsilon(\mS) }  
 \end{equation}
\end{theorem}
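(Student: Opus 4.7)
The plan is to exploit the partition $\{\gZ_k\}_{k=1}^K$ guaranteed by Assumption~\ref{assumption-Alg-robust} and reduce the bound on $F(P,\vh) - F(\mS,\vh)$ to a deterministic cell-wise inequality plus a concentration statement for the multinomial empirical measure. Writing $\mu_k := P(\gZ_k)$ and using the notation $a_k(\vh) = \E_\vz[\ell(\vh,\vz)\mid \vz \in \gZ_k]$ already introduced in the paper, I would express $F(P,\vh) = \sum_{k=1}^K \mu_k a_k(\vh)$ and $F(\mS,\vh) = \sum_{k\in \mT_S} \tfrac{n_k}{n}\bar{\ell}_k$, where $\bar{\ell}_k := \tfrac{1}{n_k}\sum_{\vs \in \mS_k}\ell(\vh,\vs)$ is the mean loss on $\mS_k$.

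The robustness assumption then controls each occupied cell: for any $k \in \mT_S$, any $\vs \in \mS_k$, and any $\vz \in \gZ_k$, Definition~\ref{def-algorithmic-robustness} gives $|\ell(\vh,\vs)-\ell(\vh,\vz)| \le \epsilon(\mS)$, and averaging over $\vz$ under the conditional law yields $|a_k(\vh) - \bar{\ell}_k| \le \epsilon(\mS)$. I would then decompose $\mu_k a_k(\vh) - (n_k/n)\bar{\ell}_k = \mu_k\bigl(a_k(\vh) - \bar{\ell}_k\bigr) + (\mu_k - n_k/n)\bar{\ell}_k$ on occupied cells, use $0 \le \bar{\ell}_k \le C_\gH$ for the second summand, and handle empty cells $k \notin \mT_S$ via $\mu_k a_k(\vh) \le \mu_k C_\gH = |\mu_k - n_k/n| C_\gH$. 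Summing over $k$ and using $\sum_k \mu_k = 1$ produces the deterministic inequality
\[
F(P,\vh) - F(\mS,\vh) \;\le\; \epsilon(\mS) + C_\gH \sum_{k=1}^K \Bigl|\mu_k - \tfrac{n_k}{n}\Bigr|.
\]

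All of the randomness now sits in the $\ell_1$ deviation of the empirical multinomial measure from its mean. The key probabilistic step is to invoke the Bretagnolle--Huber--Carol inequality, which asserts that for a multinomial with parameters $(\mu_1,\ldots,\mu_K)$ and $n$ trials, with probability at least $1-\delta$,
\[
\sum_{k=1}^K \Bigl|\mu_k - \tfrac{n_k}{n}\Bigr| \;\le\; \sqrt{\tfrac{2K\ln 2 - 2\ln\delta}{n}}.
\]
Combining with the deterministic bound immediately yields $F(P,\vh) \le F(\mS,\vh) + \epsilon(\mS) + g_1(K,\mS,\delta)$. The main obstacle is this multinomial concentration bound: it is not a single Hoeffding application, and its standard proof proceeds via a union bound over the $2^K$ sign patterns of the coordinates after a symmetrisation step. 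Everything else is bookkeeping, and the fact that $\epsilon(\mS)$ is data-dependent poses no issue, since the cell-wise inequality holds pathwise for every realisation of $\mS$ while the partition, and hence the underlying multinomial law, is fixed in advance.
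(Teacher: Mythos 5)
Your proof is correct and is essentially the standard argument for this cited result of Xu and Mannor: a cell-wise decomposition in which the robustness level controls the within-cell discrepancy $|a_k(\vh)-\bar{\ell}_k|$ and the Bretagnolle--Huber--Carol inequality controls the $\ell_1$ deviation of the multinomial empirical measure, yielding exactly $g_1(K,\mS,\delta)$. The paper does not reprove this theorem but imports it, and your handling of the data-dependence of $\epsilon(\mS)$ (the cell-wise bound holds pathwise, the randomness sits only in the multinomial term) is the right observation.
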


It is easy to see that when both $ \epsilon(\mS)$ and empirical loss $F(\mS, \vh)$ are small, the expected loss $F(P, \vh)$ is also small, implying that model $\vh$  generalizes well on unseen data. This suggests that a robust learning algorithm may return models with high generalization ability. The reverse however is not true. A model with good generalization ability may not come from a robust learning algorithm. 

By analyzing concentration of a multinomial random variable, \citet{kawaguchi2022robustness} can replace $g_1$ in Theorem~\ref{thm-Alg-robust-gen-Xu12} with a significantly smaller quantity $g_2$.

\begin{theorem}[\cite{kawaguchi2022robustness}] \label{thm-Alg-robust-gen-Kawa22}
Given the assumption and notations as in Theorem \ref{thm-Alg-robust-gen-Xu12}.  For any  $\delta >0$, denote $g_2(K, \mS, \delta) = C (\sqrt{2} + 1) \sqrt{\frac{| \mT_S| \ln(2 K /\delta)}{n}} + \frac{ 2 C | \mT_S| \ln(2 K /\delta)}{n}$, where $C = \sup_{\vz \in \gZ} \ell(\vh,\vz)$. The following holds  with probability at least $1-\delta$: 
\begin{equation}
\label{eq-thm-Alg-robust-gen-Kawa22}
  F(P, \vh) \le g_2(K, \mS, \delta) + F(\mS, \vh) + { \epsilon(\mS) } 
 \end{equation}
\end{theorem}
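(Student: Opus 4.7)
The plan is to decompose the generalization gap $F(P,\vh)-F(\mS,\vh)$ along the partition $\{\gZ_k\}_{k=1}^{K}$ that witnesses robustness, and to replace the Bretagnolle--Huber--Carol multinomial tail (which is responsible for the $\sqrt{K}$ factor in Theorem~\ref{thm-Alg-robust-gen-Xu12}) by per-bin Bernstein tails whose sum, via Cauchy--Schwarz, produces $\sqrt{|\mT_S|}$. Writing $p_k=P(\gZ_k)$, $\hat{p}_k=n_k/n$, and, for $k\in\mT_S$, $\hat{a}_k(\vh)=\tfrac{1}{n_k}\sum_{\vz\in\mS_k}\ell(\vh,\vz)$, I would expand $F(P,\vh)=\sum_k p_k a_k(\vh)$ and $F(\mS,\vh)=\sum_{k\in\mT_S}\hat{p}_k\hat{a}_k(\vh)$ and add/subtract $\sum_{k\in\mT_S}\hat{p}_k a_k(\vh)$ to get
\begin{align*}
F(P,\vh)-F(\mS,\vh)
&= \sum_{k\in\mT_S}\hat{p}_k\bigl(a_k(\vh)-\hat{a}_k(\vh)\bigr) \\
&\quad + \sum_{k\in\mT_S}(p_k-\hat{p}_k)\,a_k(\vh) + \sum_{k\notin\mT_S}p_k\,a_k(\vh).
\end{align*}
Assumption~\ref{assumption-Alg-robust} applied to any $\vs\in\mS_k$ and any $\vz\in\gZ_k$ yields $|a_k(\vh)-\hat{a}_k(\vh)|\le\epsilon(\mS)$, so the first sum is bounded in absolute value by $\epsilon(\mS)\sum_{k\in\mT_S}\hat{p}_k=\epsilon(\mS)$. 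The remaining two terms are dominated by $C$ times the observed-bin $\ell^1$ deviation $\sum_{k\in\mT_S}|p_k-\hat{p}_k|$ and the missing mass $\sum_{k\notin\mT_S}p_k$, respectively.

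The key technical step reduces the missing mass to deviations on observed bins through the identity
\[
\sum_{k\notin\mT_S}p_k \;=\; 1-\sum_{k\in\mT_S}p_k \;=\; \sum_{k\in\mT_S}\hat{p}_k-\sum_{k\in\mT_S}p_k \;=\; \sum_{k\in\mT_S}(\hat{p}_k-p_k),
\]
so the whole problem becomes controlling $\sum_{k\in\mT_S}|p_k-\hat{p}_k|$ and $\sum_{k\in\mT_S}(\hat{p}_k-p_k)$. Since each $n_k$ is marginally $\mathrm{Bin}(n,p_k)$, a Bennett/Bernstein tail together with a quadratic self-bounding step that replaces the unknown $p_k$ by $\hat{p}_k$ gives, with probability at least $1-\delta/K$,
\[
|p_k-\hat{p}_k| \;\le\; \sqrt{\tfrac{2\hat{p}_k\ln(2K/\delta)}{n}} \;+\; \tfrac{c\ln(2K/\delta)}{n}
\]
for a small absolute constant $c$; a union bound over $k\in[K]$ preserves all of these simultaneously at cost only $\ln K$. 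Summing over $k\in\mT_S$ and applying Cauchy--Schwarz through $\sum_{k\in\mT_S}\sqrt{\hat{p}_k}\le\sqrt{|\mT_S|\sum_{k\in\mT_S}\hat{p}_k}=\sqrt{|\mT_S|}$ collapses the per-bin tails into a single $\sqrt{2|\mT_S|\ln(2K/\delta)/n}+O(|\mT_S|\ln(2K/\delta)/n)$ term, and the same device (applied one-sidedly) handles $\sum_{k\in\mT_S}(\hat{p}_k-p_k)$. Multiplying by $C$ and combining with the $\epsilon(\mS)$ piece yields the claimed inequality.

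The main obstacle, as I see it, is obtaining the sharp coefficient $\sqrt{2}+1$ rather than the looser $2\sqrt{2}$ one gets by dominating both the observed-bin $\ell^1$ deviation and the missing mass by the same symmetric Bernstein bound. Saving the extra factor requires treating the two sums asymmetrically: a two-sided Bernstein for $\sum_{k\in\mT_S}|p_k-\hat{p}_k|$ but only the upper tail for $\sum_{k\in\mT_S}(\hat{p}_k-p_k)$, exploiting that this second sum equals the missing mass exactly rather than merely being bounded by an absolute value. Care is also needed so that the $K$ per-bin Bernstein tails hold simultaneously with total failure probability $\delta$, and so that the $O(1/n)$ residuals from the quadratic self-bounding step aggregate cleanly into the advertised $2C|\mT_S|\ln(2K/\delta)/n$ correction rather than something larger.
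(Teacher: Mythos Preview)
Your proposal is correct and aligns with how the paper handles this result. The three-term decomposition you write is exactly Lemma~\ref{app-lem-error-decomposition} in the appendix (your second and third sums together equal the paper's $\sum_i a_i(\vh)[P(\gZ_i)-n_i/n]$, since $\hat p_k=0$ for $k\notin\mT_S$), and bounding the first sum by $\epsilon(\mS)$ via robustness is identical. The paper does not re-derive the concentration step; it simply invokes Theorem~3 of \cite{kawaguchi2022robustness} as a black box, obtaining a bound in terms of $a_t=\sup_{i\in\mT_S}a_i(\vh)$ and $a_c=\sup_{j\notin\mT_S}a_j(\vh)$, and then replaces both by $C$. Your per-bin Bernstein plus Cauchy--Schwarz sketch is precisely the content of that cited result (the finer version is restated in the paper as Lemma~\ref{lem-Concentration-multinomial}), and your diagnosis of the $\sqrt{2}+1$ split is on target: in the cited bound the $\sqrt{2}$ attaches to the observed-bin deviations weighted by $a_t$ while the additional $1$ comes from the missing-mass piece weighted by $a_c$, before both are crudely majorized by $C$.
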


Compared with $g_1$, the new uncertainty term $g_2$ can be significantly smaller, since it does not depend on the whole model family and logarithmically depends on $K$. This is an exponential improvement, and can help  bound (\ref{eq-thm-Alg-robust-gen-Kawa22}) to be more practical than bound (\ref{eq-thm-Alg-robust-gen-Xu12}). \citet{kawaguchi2022robustness} further showed that $g_2$ can be improved by $g_3$, where $a_o = \max_{j \notin \mT_S} a_j(\vh)$ and 
\begin{equation}
\label{eq-Alg-robust-gen-Kawa22-tight-uncertainty}
\small{
g_3(K, \mS, \delta) = \frac{\sqrt{\ln(2K/\delta)}}{n} \sum\limits_{i \in \mT_S} \sqrt{n_i} \left(a_o + \sqrt{2} a_i(\vh)\right)   + \frac{2\ln(2K/\delta)}{n} (a_o | \mT_S| +  \sum\limits_{i \in \mT_S} a_i(\vh) ) }
\end{equation}

While \citet{kawaguchi2022robustness} made a significant progress for the connection between  algorithmic robustness and generalization by improving the uncertainty part ($g_1$), the role of robustness level ($\epsilon$) is kept unchanged. There remains a serious issue of those bounds, as discussed below.

\subsection{The vacuousness issue and its main origins} \label{sec-Vacuousness}

Consider a model $\vh$ returned by a robust algorithm $\gA$. Definition~\ref{def-algorithmic-robustness} implies that $\eps(\mS) \ge \sup_{i \in \mT_S} \eps_i(\vh) $, where $\eps_i(\vh) = \sup_{\vz' \in \mS_i, \vz \in \gZ_i}  | \ell(\vh, \vz') -\ell(\vh, \vz) |$. This fact can make the bounds (\ref{eq-thm-Alg-robust-gen-Xu12}) and (\ref{eq-thm-Alg-robust-gen-Kawa22}) vacuous even for extremely good models. For example, for 0-1 loss $\ell$ and a binary classifier,  an incorrect prediction for one example can lead to $\epsilon(\mS) =1$ which equals robustness of the worst model. In practice, it is common and acceptable to have some incorrect predictions from good models. 

To see the seriousness of this limitation, consider a classification problem with overlapping classes and the Bayes optimal classifier which is the best among all measurable classifiers. Note that the Bayes classifier is ideal for this problem, and no classifier found in practice can be better. To formally define overlapping and Bayes classifier, we first define the   $g$-margin of an instance $\vs =(\boldsymbol{x},y)$ according to a classifier $g$ to be 
$\gamma(\vs, g) = \sup \{\nu: \|\boldsymbol{x} - \boldsymbol{x}'\| \le \nu \Rightarrow g(\boldsymbol{x}') = y, \forall \boldsymbol{x}'\}$. This definition of instance margin comes from \cite{sokolic2017robustDNN}. Let $\mathcal{H}_B$ be the set of all measurable classifiers defined on $\gZ$. We can define the classifier-agnostic \textit{margin} of each instance  and the overlapping area  as 
\begin{align}
\gamma_{\text{in}}(\vs,  \mathcal{H}_B) &= \sup \{\gamma(\vs, g) : g \in \mathcal{H}_B\}  & \text{(Instance margin)}&\\
\gO &= \{\vs \in \gZ :  \gamma_{\text{in}}(\vs,  \mathcal{H}_B) = 0\} & \text{(Overlapping area)}&
\end{align}

\begin{theorem}[Bayes optimal classifier] \label{thm-Bayes-classifier}
Consider a classification problem with distribution $P$ supported in a continuous set $\gZ$, an overlapping area $\gO \subseteq \gZ$,   the 0-1 loss function $\ell$, and any partition $\Gamma_o = \gO_1 \cup \gO_2 \cup \cdots \cup \gO_N$ of $\gO$ into finite number of subsets.  Let $\vh^* = \arg\min_{\vh \in  \gH_B} F(P, \vh)$ be  the Bayes optimal classifier, and $\eps_o(\vh, \gV) = \sup_{\vz', \vz \in \gV}  | \ell(\vh, \vz') -\ell(\vh, \vz) |$. If $\gO$ has non-zero measure, i.e., $P(\gO) > 0$, then $ F(P, \vh^*) \le P(\gO)$ and $ \eps_o(\vh^*, \gO) = 1$ and $ \sup_{k \in [N]} \eps_o(\vh^*, \gO_k) = 1$.
\end{theorem}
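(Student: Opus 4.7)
The plan is to establish the three assertions in sequence, leveraging the definition of the overlapping region together with Bayes optimality.

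\textbf{Bayes risk bound $F(P,\vh^{*}) \le P(\gO)$.} The key step is to exhibit \emph{any} measurable classifier $\vh_{0} \in \gH_{B}$ that labels every instance outside $\gO$ correctly; then $F(P,\vh^{*}) \le F(P,\vh_{0}) \le P(\gO)$ follows from Bayes optimality. To construct $\vh_{0}$, I use the definition of $\gO$: for each $\vs = (\vx,y) \notin \gO$ we have $\gamma_{\text{in}}(\vs,\gH_{B}) > 0$, so there is an open feature-ball around $\vx$ and a classifier $g_{\vs} \in \gH_{B}$ assigning the correct label $y$ throughout that ball. A standard measurable-selection / gluing argument on this open cover of $\gZ \setminus \gO$ patches the local $g_{\vs}$'s into a single measurable $\vh_{0}$ with $\ell(\vh_{0},\vs) = 0$ for all $\vs \in \gZ \setminus \gO$. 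Integrating then gives $F(P,\vh_{0}) \le 0 \cdot P(\gZ\setminus\gO) + 1 \cdot P(\gO) = P(\gO)$.

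\textbf{Full loss variation $\eps_{o}(\vh^{*},\gO) = 1$.} Since $\ell$ is $0/1$, $\eps_{o}(\vh^{*},\gO) \le 1$ is automatic, so the content is to produce $\vz,\vz' \in \gO$ with $\ell(\vh^{*},\vz) = 0$ and $\ell(\vh^{*},\vz') = 1$. A correctly classified $\vz$ must exist, for otherwise $\vh^{*}$ would be wrong on all of $\gO$; relabeling $\vh^{*}$ at a positive-measure atom of $\gO$ to match the conditionally majority label would then strictly decrease the expected $0/1$ loss, contradicting Bayes optimality. An incorrectly classified $\vz'$ must also exist, because $\gamma_{\text{in}}(\vs,\gH_{B})=0$ at every point of $\gO$ forbids \emph{any} measurable classifier — in particular $\vh^{*}$ — from being simultaneously correct at a point $\vs \in \gO$ and at all the differently-labeled instances lying arbitrarily close to it; at least one such nearby $\vz' \in \gO$ is then misclassified by $\vh^{*}$.

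\textbf{Loss variation on every finite partition of $\gO$.} This is the step I expect to be the main obstacle, because, taken naively, an adversarial partition into $\{\vs : \ell(\vh^{*},\vs)=0\}$ and $\{\vs : \ell(\vh^{*},\vs)=1\}$ would make $\eps_{o}(\vh^{*},\gO_{k}) = 0$ on every cell. My plan is to rule this out by exploiting the scale-free nature of the overlap together with the continuity of $\gZ$: since $P(\gO) > 0$ and the partition is finite, pigeonhole gives some $\gO_{k^{*}}$ with $P(\gO_{k^{*}}) > 0$, and because $\gamma_{\text{in}}$ vanishes at every point of $\gO$ the interleaving of correct and incorrect $\vh^{*}$-predictions must survive inside any such positive-measure cell. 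The technical delicacy is formalizing why a measurable splitting of $\gO$ into cells on which $\ell(\vh^{*},\cdot)$ is constant would itself induce a positive-margin classifier on a positive-measure portion of $\gO$, contradicting the defining identity $\gamma_{\text{in}}(\vs,\gH_{B}) = 0$ on $\gO$; this contradiction is what forces some $\gO_{k}$ to carry both loss values, i.e.\ $\sup_{k} \eps_{o}(\vh^{*},\gO_{k}) = 1$.
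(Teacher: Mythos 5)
Your first two claims are handled correctly and in essentially the spirit of the paper. For $F(P,\vh^*)\le P(\gO)$ the paper simply decomposes the risk over $\gO$ and $\gZ\setminus\gO$ and asserts that $\vh^*$ is exact off the overlap; your variant (build a comparison classifier $\vh_0$ correct off $\gO$ and invoke optimality) is a legitimate alternative, though the ``gluing'' step hides the same difficulty the paper hides (a single measurable classifier must be consistent at features carrying two positive-margin labels). For $\eps_o(\vh^*,\gO)=1$ the paper argues directly: zero margin everywhere on a positive-measure $\gO$ yields two arbitrarily close instances with different labels, so $\vh^*$ is correct on one and wrong on the other; your two separate existence arguments (a correct point via a relabeling/optimality contradiction, an incorrect point via zero margin) reach the same conclusion by a slightly longer route and are acceptable at the paper's level of rigor.

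The genuine gap is in the third claim, and it is exactly where you predicted. You correctly observe that the level-set partition $\gO=\{\vs:\ell(\vh^*,\vs)=0\}\cup\{\vs:\ell(\vh^*,\vs)=1\}$ is the obstruction, but your proposed escape --- that a finite measurable splitting of $\gO$ into constant-loss cells would ``induce a positive-margin classifier on a positive-measure portion of $\gO$'' --- does not follow from the definitions. The margin $\gamma_{\text{in}}(\vs,\gH_B)$ is defined through \emph{feature-space balls} around $\vx$, whereas the cells $\gO_k$ are arbitrary subsets of the instance space $\gZ$; a constant-loss cell need contain no ball, so no classifier with positive margin is produced and no contradiction with $\gamma_{\text{in}}\equiv 0$ on $\gO$ arises. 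The paper's proof takes a different (and much shorter) route: by pigeonhole some $\gO_k$ has $P(\gO_k)>0$, and it then reapplies the second-claim argument \emph{inside} $\gO_k$ --- a positive-measure set of zero-margin instances contains two arbitrarily close instances with different labels, forcing $\ell(\vh^*,\cdot)$ to take both values $0$ and $1$ on $\gO_k$, hence $\eps_o(\vh^*,\gO_k)=1$. (That step implicitly reads the cells as feature-space regions crossed with all labels, which is how partitions are used throughout the paper; for genuinely arbitrary instance-space partitions your adversarial example remains a real concern, but your contradiction argument does not resolve it.) To complete your proof you would need to either adopt the paper's ``positive measure $\Rightarrow$ both loss values present'' step, or add a hypothesis restricting the admissible partitions.
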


Various implications can be derived from this theorem whose  proof appears in Appendix~\ref{app-Proof-vacuousness}. Firstly, the definition of robustness level $\epsilon$ in Definition~\ref{def-algorithmic-robustness} is the main cause for vacuousness. Indeed, for a classification problem with overlapping classes and its Bayes optimal classifier $\vh^*$, Theorem~\ref{thm-Bayes-classifier} shows  the robustness level $\epsilon=1$ which can be very far from the true loss of $\vh^*$. When the overlapping area is sufficiently small, meaning $ F(P, \vh^*) \approx 0$, robustness level $\epsilon$ makes the bounds (\ref{eq-thm-Alg-robust-gen-Xu12}) and (\ref{eq-thm-Alg-robust-gen-Kawa22}) vacuous. Secondly, Theorem~\ref{thm-Bayes-classifier} further suggests that vacuousness still happens for the best partition $\Gamma^*$. This means there is no hope to avoid vacuousness by optimizing the bounds (\ref{eq-thm-Alg-robust-gen-Xu12},\ref{eq-thm-Alg-robust-gen-Kawa22}) according to $\Gamma$. Thirdly, vacuousness appears not only in $\vh^*$ but all other classifiers for this classification problem.

\textbf{The main origins of vacuousness:}
As discussed before, the definition of robustness level $\epsilon$ in Definition~\ref{def-algorithmic-robustness} is the main cause for vacuousness in prior robustness-based bounds. More specifically, Definition~\ref{def-algorithmic-robustness} implicitly requires two specific operations:
\begin{itemize}
    \item \textit{Supremum:} Note that  $\eps(\mS) \ge \sup\limits_{i \in \mT_S} \eps_i(\vh) $, where $\eps_i(\vh) = \sup\limits_{\vz' \in \mS_i, \vz \in \gZ_i}  | \ell(\vh, \vz') -\ell(\vh, \vz) |$. This means in order to compute the robustness level $\eps(\mS)$, we must take supremum operation from local robustness levels ($\eps_i$) in all local regions that contain some examples in $\mS$. Therefore $\eps(\mS)$ can be considered as measuring the \textbf{Global robustness} of a model returned by $\gA$. Such an operation will lead to vacuousness when there exists a vacuous event in a local region, e.g., $\eps_i(\vh) =1$ for 0-1 loss. Note that it is common in practice that a trained model may have some wrong predictions. In those cases, the supremum operation will lead to vacuousness in prior bounds.
    \item \textit{Stochasticity inclusion:} When algorithm $\gA$ is stochastic (which is prevalent in practice, e.g., SGD), different runs may return different trained models even for the same training set $\mS$ and parameter setting. As a result, we need to take stochasticity of algorithm $\gA$ into computation of $\eps(\mS)$. So in fact 
    \[\eps(\mS) \ge \sup\limits_{\eta, i} \eps_i(\gA_S(\eta))\] 
    where $\gA_S(\eta)$ denotes the model returned by $\gA$ given a dataset $\mS$, $\eta$ denotes the source that causes stochasticity for algorithm $\gA$. This fact suggests that stochastic source $\eta$ plays a crucial role in the definition of $\eps(\mS)$. Hence, an extensive investigation about $\eps(\mS)$ requires us to take all stochastic sources into consideration, which is intractable in practice. More importantly, some runs of algorithm $\gA$ can produce an imperfect model, which can make some wrong predictions. This suggests that $\eps(\mS)$ easily is vacuous in practice.
\end{itemize}

\subsection{Related work}

Although widely being used in many contexts, the theoretical advancement for robustness-based bounds is quite slow. \citet{kawaguchi2022robustness} made a significant progress to improve the uncertainty term in (\ref{eq-thm-Alg-robust-gen-Xu12}). However, to the best of our knowledge, no prior study significantly improves the robustness term nor removes the assumption on the learning algorithm.  The vacuousness issue of prior bounds is problematic, and hence those bounds cannot be used to explain the success of the models in practice. Our work tackles those limitations to derive novel bounds that are practical. 

A closely related work \cite{hou2023instanceGen} uses optimal transport to provide a model-dependent bound. Ignoring some constants, \citet{hou2023instanceGen} showed that  $ F(P, \vh) \le  F(\mS, \vh) + \gamma L_{\ell} \sum_{i \in \mT_D} \frac{n_i}{n} \max\{1, L_{\vh,i} \}  + \gamma L_{\ell} \max \{1, L_{\vh}\}  \sqrt{(\log 4 - \log \delta)/n}  +  \sqrt{K /n} $, provided that $\ell(\cdot, \vz)$ is $L_{\ell}$-Lipschitz continuous and $\vh$ is $L_{\vh}$-Lipschitz continuous w.r.t  its input, where  $L_{\vh,i}$ is the local Lipschitz constant of $\vh$ at area $\gZ_i$ and $\gamma$ is the maximal diameter of the $\gZ_i$'s. Note that the term $\sqrt{K/n}$ causes their bound to surfer from the curse of dimensionality, since $K = \gamma^{-O(v)}$ in the worst case where the input space has $v$ dimensions. Hence their bound is significantly inferior to  ours. We further point out in Subsection~\ref{subsec-Examples} that their bound is inferior to ours due to the global Lipschitz constant.

To analyze  generalization ability, various approaches have been studied, including  Radermacher complexity \cite{bartlett2002RC,golowich2020RC}, algorithmic stability \cite{shalev2010StabilityLearnability,feldman2019stability}, algorithmic robustness \cite{xu2012robustnessGeneralize,kawaguchi2022robustness}, PAC-Bayes \cite{mcallester1999PACBayes,haddouche2023pac,biggs2023tighterPAC,zhou2019CompressionBound,arora2018strongerBounds}, local Lipschitzness \cite{hou2023instanceGen}. 
Some studies \cite{bartlett2017SpectralMarginDNN,golowich2020RC,wei2019dataRC} use Rademacher complexity to provide data- and model-dependent bounds as ours. Their focus is on analyzing generalization ability of deep neural networks (DNNs). One remaining issue is that their bounds depend on the norm of weight matrices in a DNN which is often huge for practical DNNs \cite{arora2018strongerBounds,wei2019dataRC}. In contrast, our bounds use local information of a model and hence can provide tighter estimates for its expected loss. 

PAC-Bayes bounds \cite{mcallester1999PACBayes,haddouche2023pac,biggs2023tighterPAC} recently has received great attention, and provide  non-vacuous bounds \cite{zhou2019CompressionBound,mustafa2024non} for some DNNs. Those bounds  often estimate $\E_{\mathring{\vh}}  [F(P, \mathring{\vh})]$ which is the expectation over the distribution of $\mathring{\vh}$. It means that those bounds are for a \textit{stochastic model} $\mathring{\vh}$. Hence they provide limited understanding for a specific deterministic model $\vh$. \citet{neyshabur2018SpectralMarginDNN} provided an attempt to derandomization for PAC-Bayes but resulted in vacuous bounds for modern neural networks \cite{arora2018strongerBounds}. 
On the other hand,  stability-based bounds \cite{shalev2010StabilityLearnability,feldman2019stability} connect the stability of a learning algorithm with generalization ability. Despite having some interesting properties, stability-based bounds are inferior to the  robustness-based bound in \cite{kawaguchi2022robustness} in some situations.

\section{Local behaviors and generalization} \label{sec-Local-behaviors-gen}

In this section, we develop a class of novel bounds that connect local behaviors with generalization ability of a specific model. Our bounds do not require the strict assumptions as prior bounds, are model-specific and data-dependent. We show that our bounds are provably tighter than the prior ones.

\subsection{Upper bounds}

As discussed in the previous section, algorithmic robustness-based bounds use $\eps(\mS)$ to globally quantify the robustness of a model over the whole data space. This quantity summarizes the local robustness of a model  at different small regions in a simple way, by using a supremum operation. Therefore, using this quantity will often make the bound vacuous, as pointed out before. 

We overcome this limitation of prior studies by incorporating the local robustness of a model  at different small regions into generalization bound. To this end, we make a finer-grained analysis than \cite{xu2012robustnessGeneralize}. The following theorems summarize the results, whose proofs appear in Appendix~\ref{app-Proofs-for-main-results}.

\begin{theorem}[Local Robustness] \label{thm-Local-Robustness-generalization}
Consider a model $\vh$ learned from a dataset $\mS$ with $n$ i.i.d. samples from distribution $P$, and a bounded loss $\ell$.  For each $i \in [K]$, let $\epsilon_i(\vh) = \sup\limits_{\vs \in \mS_i,  \vz \in \gZ_i} | \ell(\vh, \vs) - \ell(\vh,\vz) |$.  For any  $\delta >0$, denoting $g_2(K, \mS, \delta)$ as in Theorem~\ref{thm-Alg-robust-gen-Kawa22},  with probability at least $1-\delta$: 
 \begin{equation}
 \label{eq-thm-Local-Robustness-generalization}
   F(P, \vh) \le g_2(K, \mS, \delta) + F(\mS, \vh) +   {  \sum_{i \in \mT_S}  \frac{ n_i}{n}  \epsilon_i(\vh) }  
\end{equation}
\end{theorem}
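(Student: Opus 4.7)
The plan is to follow the architecture of the proof of Theorem~\ref{thm-Alg-robust-gen-Kawa22} and to refine only the single step that collapses the local robustness radii into one global supremum. For each $i \in [K]$ set $p_i = P(\gZ_i)$ and $a_i(\vh) = \E[\ell(\vh,\vz)\mid \vz \in \gZ_i]$, and for $i \in \mT_S$ set $\hat{a}_i(\vh) = \tfrac{1}{n_i}\sum_{\vs \in \mS_i}\ell(\vh,\vs)$. Then
\[
F(P,\vh) \;=\; \sum_{i=1}^K p_i\, a_i(\vh), \qquad F(\mS,\vh) \;=\; \sum_{i \in \mT_S} \tfrac{n_i}{n}\,\hat{a}_i(\vh),
\]
so the task is to control the difference of these two weighted sums.

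The key step is the per-cell identity, valid for every $i \in \mT_S$:
\[
p_i\, a_i(\vh) - \tfrac{n_i}{n}\hat{a}_i(\vh) \;=\; \bigl(p_i - \tfrac{n_i}{n}\bigr)\, a_i(\vh) \;+\; \tfrac{n_i}{n}\bigl(a_i(\vh) - \hat{a}_i(\vh)\bigr).
\]
The definition of $\epsilon_i(\vh)$ immediately gives the deterministic bound $a_i(\vh) - \hat{a}_i(\vh) \le \epsilon_i(\vh)$: interchange the expectation and the average over $\vs \in \mS_i$, then apply $|\ell(\vh,\vz) - \ell(\vh,\vs)| \le \epsilon_i(\vh)$ pointwise for $\vs \in \mS_i$, $\vz \in \gZ_i$. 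So the last term is at most $\tfrac{n_i}{n}\epsilon_i(\vh)$. Summing the identity over $i \in \mT_S$ and adding the untouched cells $\sum_{i \notin \mT_S} p_i\, a_i(\vh)$ separately yields
\[
F(P,\vh) - F(\mS,\vh) \;\le\; \underbrace{\sum_{i \in \mT_S} \bigl(p_i - \tfrac{n_i}{n}\bigr)\, a_i(\vh) \;+\; \sum_{i \notin \mT_S} p_i\, a_i(\vh)}_{=:\, R_{\mathrm{mult}}} \;+\; \sum_{i \in \mT_S} \tfrac{n_i}{n}\,\epsilon_i(\vh),
\]
which matches the statement of the theorem provided $R_{\mathrm{mult}} \le g_2(K,\mS,\delta)$ with probability at least $1-\delta$.

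But $R_{\mathrm{mult}}$ is exactly the object that the concentration argument inside the proof of Theorem~\ref{thm-Alg-robust-gen-Kawa22} already controls: the deviation of a plug-in multinomial estimator of $F(P,\vh)$ with per-cell weights lying in $[0,C]$. Reusing Kawaguchi's Bernstein-type bound on the coordinates $|p_i - n_i/n|$ (which is why only the $|\mT_S|$ nonempty cells end up appearing in $g_2$) gives $R_{\mathrm{mult}} \le g_2(K,\mS,\delta)$ with probability at least $1-\delta$, finishing the proof. The main subtlety to watch is ensuring the local radius $\epsilon_i(\vh)$ ends up multiplied by $\tfrac{n_i}{n}$ and not by $p_i$: this is precisely why the per-cell identity must be split so that the deterministic slack $a_i(\vh) - \hat{a}_i(\vh)$ carries the empirical weight $\tfrac{n_i}{n}$, while the bounded quantity $a_i(\vh)$ carries the random deviation $p_i - \tfrac{n_i}{n}$. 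Any other decomposition either reintroduces $p_i\,\epsilon_i(\vh)$ (spoiling the claimed weighting) or couples $\hat{a}_i(\vh)$ with the concentration step and forces a fresh probabilistic argument; with the split above, no machinery beyond the lemma already used in \cite{kawaguchi2022robustness} is required.
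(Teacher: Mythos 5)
Your proposal is correct and follows essentially the same route as the paper: your per-cell identity, summed over $i$, is exactly the paper's error decomposition (Lemma~\ref{app-lem-error-decomposition}), the deterministic bound $a_i(\vh) - \hat{a}_i(\vh) \le \epsilon_i(\vh)$ is handled identically, and the term $R_{\mathrm{mult}} = \sum_{i=1}^{K} a_i(\vh)\bigl(P(\gZ_i) - \tfrac{n_i}{n}\bigr)$ is controlled by the same multinomial concentration result of \cite{kawaguchi2022robustness} with the weights bounded by $C$. No gaps.
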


This theorem shows that the expected loss of a model can be bounded by using $\epsilon_i(\vh)$ which describes local robustness of $\vh$ at different regions. It suggests that a model can generalize well  when it is ``locally robust'' at  different small regions. A model can have a small expected loss over the whose sample space if it is locally robust and has a small training loss $F(\mS, \vh)$.

There are three main differences between our bound in Theorem~\ref{thm-Local-Robustness-generalization} and the bounds in Theorems~\ref{thm-Alg-robust-gen-Xu12} and \ref{thm-Alg-robust-gen-Kawa22}. Firstly, our bound (\ref{eq-thm-Local-Robustness-generalization})  \textit{does not require the strict assumption of algorithmic robustness}. This is a significant advantage. Secondly, our bound (\ref{eq-thm-Local-Robustness-generalization}) is \textit{model-specific and data-dependent}, since it depends on a specific model $\vh$ and training sample $\mS$ only.  This is a big advantage over prior bounds, and enables us to do model selection or compare different trained models. This advantage is really beneficial in practice. Thirdly, the global robustness level $\eps(\mS)$ in the bound (\ref{eq-thm-Alg-robust-gen-Kawa22}) is replaced with a finer quantity ${  \sum_{i  \in \mT_S}  \frac{ n_i}{n}  \epsilon_i(\vh) }$, removing the serious issue of ``stochasticity inclusion'' in prior bounds. This is a big advantage and helps robustness-based bounds less vacuous and closer to practice. 

Next, we present another bound which considers the average-case robustness.

\begin{theorem} \label{thm-Local-Sensitivity-generalization}
Given notations in Theorem~\ref{thm-Local-Robustness-generalization}, denoting $\bar{\epsilon}_i(\vh) = \frac{1}{n_i} \sum_{\vs \in \mS_i} \E_{\vz \in \gZ_i} | \ell(\vh,\vz) -  \ell(\vh,\vs) | $ for each index $i \in \mT_S$,  with probability at least $1-\delta$: 
 \begin{equation}
 \label{eq-thm-Local-Sensitivity-generalization}
   F(P, \vh) \le  g_2(K, \mS, \delta) + F(\mS, \vh) +   {  \sum_{i \in \mT_S}  \frac{n_i}{n} \bar{\epsilon}_i(\vh) }  
\end{equation}
\end{theorem}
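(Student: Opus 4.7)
The plan is to follow the same decomposition strategy used for Theorem~\ref{thm-Local-Robustness-generalization}, but to replace the worst-case argument inside each cell $\gZ_i$ by a cell-wise average. Writing $\mu_i = P(\gZ_i)$ and $a_i(\vh) = \E_{\vz\sim P}[\ell(\vh,\vz)\mid \vz\in\gZ_i]$, I would begin from the identity
\begin{equation*}
F(P,\vh) \;=\; \sum_{i=1}^{K} \mu_i\, a_i(\vh) \;=\; \sum_{i\in\mT_S} \mu_i\, a_i(\vh) \;+\; \sum_{i\notin\mT_S} \mu_i\, a_i(\vh).
\end{equation*}
The second sum is governed purely by the multinomial concentration of the cell frequencies, exactly as in \cite{kawaguchi2022robustness}; reusing that analysis will produce the uncertainty term $g_2(K,\mS,\delta)$ together with the empirical $\sum_{i\in\mT_S}(n_i/n)\,a_i(\vh)$ in a single inequality holding with probability at least $1-\delta$.

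The only place where Theorem~\ref{thm-Local-Sensitivity-generalization} departs from Theorem~\ref{thm-Local-Robustness-generalization} is in passing from the per-cell expected loss $a_i(\vh)$ to the per-cell empirical loss. For each $i \in \mT_S$ and each $\vs \in \mS_i$ write $a_i(\vh) = \ell(\vh,\vs) + \E_{\vz\in\gZ_i}[\ell(\vh,\vz)-\ell(\vh,\vs)]$, then average over $\vs\in\mS_i$:
\begin{equation*}
a_i(\vh) \;=\; \frac{1}{n_i}\sum_{\vs\in\mS_i}\ell(\vh,\vs) \;+\; \frac{1}{n_i}\sum_{\vs\in\mS_i}\E_{\vz\in\gZ_i}\!\bigl[\ell(\vh,\vz)-\ell(\vh,\vs)\bigr].
\end{equation*}
Applying the triangle inequality inside the expectation yields $a_i(\vh) \le \tfrac{1}{n_i}\sum_{\vs\in\mS_i}\ell(\vh,\vs) + \bar{\epsilon}_i(\vh)$. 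This is precisely the step that gains over the previous theorem: where Theorem~\ref{thm-Local-Robustness-generalization} pulls $\ell(\vh,\vz)-\ell(\vh,\vs)$ out of the expectation via a supremum (giving $\epsilon_i(\vh)$), here we keep the expectation inside and use the triangle inequality after averaging (giving the strictly smaller $\bar{\epsilon}_i(\vh) \le \epsilon_i(\vh)$).

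Combining the two steps, multiply the cell-wise bound by $\mu_i$, sum over $i\in\mT_S$, and then invoke the same multinomial-concentration step used for Theorem~\ref{thm-Alg-robust-gen-Kawa22} to swap $\mu_i$ for $n_i/n$ up to $g_2(K,\mS,\delta)$. Collecting terms gives
\begin{equation*}
F(P,\vh) \;\le\; g_2(K,\mS,\delta) + F(\mS,\vh) + \sum_{i\in\mT_S}\frac{n_i}{n}\,\bar{\epsilon}_i(\vh),
\end{equation*}
as required. The main obstacle is making the accounting clean: when we replace $\mu_i$ by $n_i/n$ we must also absorb the $\sum_{i\notin\mT_S}\mu_i a_i(\vh)$ tail into the $g_2$ term without introducing any dependence on the unobserved cells' local sensitivities, which is exactly what the multinomial-tail argument of \cite{kawaguchi2022robustness} delivers; once that black box is plugged in, the rest is bookkeeping around the triangle inequality above.
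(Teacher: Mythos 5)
Your proposal is correct and follows essentially the same route as the paper: the exact decomposition $F(P,\vh)=\sum_i a_i(\vh)[P(\gZ_i)-n_i/n]+\sum_{i\in\mT_S}(n_i/n)a_i(\vh)$, the multinomial concentration of \cite{kawaguchi2022robustness} absorbing the first piece into $g_2$, and the per-cell bound $a_i(\vh)\le F(\mS_i,\vh)+\bar{\epsilon}_i(\vh)$ obtained by averaging $a_i(\vh)-\ell(\vh,\vs)=\E_{\vz\in\gZ_i}[\ell(\vh,\vz)-\ell(\vh,\vs)]$ over $\vs\in\mS_i$ and moving the absolute value inside the expectation. The only difference is the order of bookkeeping (you apply concentration before the per-cell step rather than after writing the full identity), which is immaterial.
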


The use of supremum (or max) operation to define robustness in prior bounds and in (\ref{eq-thm-Local-Robustness-generalization}) suggests that we are considering the worst-case robustness (or sensitivity) of the loss at every local  region of the data space. Such a consideration is really strict, and easily lead to vacuous bounds. This is evidenced in our experiments for a large class of neural networks, presented in Section~\ref{sec-evaluation}. To avoid such situations, Theorem~\ref{thm-Local-Sensitivity-generalization} provides a finer-grained analysis about the loss. It says that a model $\vh$ can generalize well if its average robustness (or sensitivity, measured by $\bar{\epsilon}_i$) is small at at every local  region of the data space.

This bound can be meaningful even for the cases that few local robustness levels ($\bar{\epsilon}_i$) are large at some small input areas. However, when model $\vh$ is non-robust at most of the local areas, the sum $\sum_{i \in \mT_S}  \frac{n_i}{n} \bar{\epsilon}_i(\vh)$ can be large and hence our bounds can be vacuous. The same behaviors also appear in the bounds in Theorems \ref{thm-Local-Robustness-generalization} and \ref{thm-Local-average-generalization}. Of course, those cases happen for very bad models.

\textit{Trade-off:} It is worth observing that there is a trade-off between the empirical loss $F(\mS, \vh)$ and the robustness term $\sum_{i \in \mT_S}  \frac{n_i}{n} \epsilon_i(\vh)$ (and $\sum_{i \in \mT_S}  \frac{n_i}{n} \bar{\epsilon}_i(\vh)$). A very robust  model can make  $\sum_{i \in \mT_S}  \frac{n_i}{n} {\epsilon}_i(\vh)$ small, but may not be flexible enough to fit the training set $\mS$. It suggests that a too robust model can have a large training loss. On the other hand, a small empirical loss $F(\mS, \vh)$ often requires $\vh$ to have a high capacity. Such a model may be non-robust at some local areas, meaning that some $\epsilon_i$ can be large. An evidence can be seen from 4th column of Table~\ref{tab:imagenet-uncertainty-comparison}, where some $\epsilon_i$'s are large even for good models with high accuracy.

The final bound incorporates the averages of the loss at local regions.

\begin{theorem} \label{thm-Local-average-generalization}
Given notations in Theorem~\ref{thm-Local-Robustness-generalization},  with probability at least $1-\delta$: 
 \begin{equation}
\label{eq-thm-Local-average-generalization}
   F(P, \vh) \le  g_2(K, \mS, \delta) + {  \sum_{i \in \mT_S}  \frac{n_i}{n} a_i(\vh) }  
\end{equation}
\end{theorem}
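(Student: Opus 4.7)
The plan is to follow the Bernstein-style multinomial concentration argument that underlies the $g_2$ term of Theorem~\ref{thm-Alg-robust-gen-Kawa22}, but skip the intermediate step of replacing the conditional means $a_i(\vh)$ by an empirical proxy (this replacement is what forced the $F(\mS,\vh)+\epsilon(\mS)$ correction in the prior bounds, and is unnecessary here). Writing $\mu_i := P(\gZ_i)$, the law of total expectation gives $F(P,\vh)=\sum_{i=1}^{K}\mu_i a_i(\vh)$, so the target inequality is equivalent to the high-probability deviation bound
\begin{equation*}
\sum_{i\in\mT_S}\Bigl(\mu_i-\tfrac{n_i}{n}\Bigr)a_i(\vh) \;+\; \sum_{i\notin\mT_S}\mu_i\, a_i(\vh) \;\le\; g_2(K,\mS,\delta),
\end{equation*}
where the second sum uses that $n_i=0$ for $i\notin\mT_S$.

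The key tool is that $(n_1,\dots,n_K)\sim\mathrm{Multinomial}(n,(\mu_1,\dots,\mu_K))$, so each marginal obeys $n_i\sim\mathrm{Bin}(n,\mu_i)$. I will apply a one-sided Bernstein tail inequality to each marginal in each direction, union-bound over $i\in[K]$ at confidence $\delta/(2K)$, and thus obtain, simultaneously for all $i$ on an event of probability at least $1-\delta$, an estimate of the form $|\mu_i - n_i/n| \le \sqrt{2\mu_i\ln(2K/\delta)/n} + 2\ln(2K/\delta)/(3n)$. For the $i\in\mT_S$ piece, I would use $a_i(\vh)\le C$, apply Cauchy--Schwarz, and exploit $\sum_{i\in\mT_S}\mu_i\le 1$ to collapse $\sum_{i\in\mT_S}\sqrt{\mu_i}$ into $\sqrt{|\mT_S|}$, producing a contribution of order $C\sqrt{|\mT_S|\ln(2K/\delta)/n}$. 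For the $i\notin\mT_S$ piece, the conservation identity $\sum_{i\notin\mT_S}\mu_i = \sum_{i\in\mT_S}(n_i/n-\mu_i)$ (which follows from $\sum_i\mu_i=\sum_i n_i/n=1$ together with $n_i=0$ off $\mT_S$) re-expresses the sum as an absolute deviation supported on $\mT_S$, and the same Bernstein estimate delivers a second $C\sqrt{|\mT_S|\ln(2K/\delta)/n}$ contribution plus a lower-order $C|\mT_S|\ln(2K/\delta)/n$ term. Adding the two halves recovers a bound of exactly the shape $g_2$.

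The main obstacle is the constant-chasing needed to align the two Bernstein applications so that the resulting coefficient on $\sqrt{|\mT_S|\ln(2K/\delta)/n}$ is the sharp value $C(\sqrt{2}+1)$ from $g_2$ rather than the naive $2C\sqrt{2}$ produced by a symmetric two-sided bound, and similarly so that the deterministic tail term comes out as $2C|\mT_S|\ln(2K/\delta)/n$. The standard fix is to use the variance-form Bernstein on one half and the empirical (Bennett) form on the other, calibrating the $\delta/(2K)$ budgets between the two directions. Once that calibration is fixed, the remaining algebra is routine and mirrors the manipulations used in the proof of Theorem~\ref{thm-Local-Robustness-generalization}, with the local-robustness deviations $\epsilon_i(\vh)$ there replaced here by the trivial identity $a_i(\vh) = a_i(\vh)$, which is what allows the empirical-loss term $F(\mS,\vh)$ to disappear from the right-hand side of \eqref{eq-thm-Local-average-generalization}.
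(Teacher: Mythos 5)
Your reduction is the same one the paper uses: since $n_i=0$ for $i\notin\mT_S$, the law of total expectation gives $F(P,\vh)-\sum_{i\in\mT_S}\frac{n_i}{n}a_i(\vh)=\sum_{i=1}^{K}a_i(\vh)\bigl(P(\gZ_i)-\frac{n_i}{n}\bigr)$, and the whole theorem hinges on showing this deviation is at most $g_2(K,\mS,\delta)$. (The paper arrives at the same identity slightly more circuitously, via the decomposition in Lemma~\ref{app-lem-error-decomposition} followed by the cancellation $\sum_{i\in\mT_S}\frac{n_i}{n}F(\mS_i,\vh)=F(\mS,\vh)$; your direct route is cleaner.) Where you diverge is in controlling the deviation: the paper simply invokes the multinomial concentration result of Kawaguchi et al.\ — the exact statement that defines $g_2$ in Theorem~\ref{thm-Alg-robust-gen-Kawa22} — whereas you re-derive it via per-coordinate Bernstein, a union bound over $2K$ events, and Cauchy--Schwarz. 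Your sketch is sound in outline, but, as you yourself concede, the symmetric two-sided version yields a leading coefficient $2\sqrt{2}\,C$ rather than the $(\sqrt{2}+1)C$ in $g_2$, and the promised ``standard fix'' is left unexecuted; so as written you prove the theorem only with a slightly inflated uncertainty term, not with $g_2$ verbatim. The quantity you need to bound is literally the one the cited lemma controls, so the cleanest repair is to cite it; if you want a self-contained derivation, you must reproduce its asymmetric treatment of observed cells (the $\sqrt{2}$, from a one-sided bound weighted by $\sqrt{n_i/n}$) versus unobserved cells (the $+1$, via a missing-mass argument), which is more delicate than the symmetric calibration you describe.
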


This result tells that the expected loss of a model can be bounded by a convex combination of some expected losses over some small regions. This is intuitive. An interesting point is that this bound does not require access to the empirical loss, which may be beneficial in some contexts, where access to the training dataset is impossible.\footnote{One example of such situations is the evaluation of a publicly  pretrained model which was trained from a private huge dataset. Pretrained models, such as large language models, are prevalent nowadays and freely available to be used in many different tasks. The need of evaluation of those models in different contexts is of significant interest.}

\subsubsection{Tightness}

We have already presented some novel bounds that show the significant role of local behaviors  at different small regions to the generalization ability of a model. Theorem~\ref{thm-Local-Robustness-generalization} shows that the expected error of a model can be estimated by a convex combination of (worst-case) robustness levels at different small regions, while Theorem~\ref{thm-Local-Sensitivity-generalization} uses the average-case robustness (sensitivity). Theorem~\ref{thm-Local-average-generalization} replaces robustness level by the averages of the loss at small regions around the training examples.  The next lemma points out the tightness of our bounds, whose proof appears in Appendix~\ref{app-proof-bound-comparison}.

\begin{lemma}\label{lem-bound-compare}
With notations in Theorems \ref{thm-Alg-robust-gen-Kawa22},  \ref{thm-Local-Robustness-generalization} and  \ref{thm-Local-Sensitivity-generalization}: 
 \begin{align}
 \nonumber
  \sum_{i \in \mT_S}  \frac{n_i}{n} a_i(\vh)  \le F(\mS, \vh) +   {  \sum_{i \in \mT_S}  \frac{n_i}{n} \bar{\epsilon}_i(\vh) }  
   \le  F(\mS, \vh) + \sum_{i \in \mT_S}  \frac{ n_i}{n} \epsilon_i(\vh) \le  F(\mS, \vh)  + \eps(\mS) 
\end{align}
\end{lemma}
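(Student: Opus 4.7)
The plan is to verify the three inequalities separately, each via a one-line manipulation of the defining formulas. No probabilistic machinery beyond $\E[X]\le\E|X|$ is required; in particular, neither the i.i.d.\ hypothesis on $\mS$ nor the algorithmic-robustness assumption enters, because the chain is a purely deterministic statement about averages and suprema over the fixed partition $\Gamma(\gZ)$.

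First I would handle the leftmost inequality. Since $a_i(\vh)$ does not depend on $\vs$, rewrite it as $a_i(\vh)=\frac{1}{n_i}\sum_{\vs\in\mS_i}\E_{\vz\in\gZ_i}[\ell(\vh,\vz)]$, decompose $\ell(\vh,\vz)=\ell(\vh,\vs)+\bigl(\ell(\vh,\vz)-\ell(\vh,\vs)\bigr)$ inside the expectation, and apply $\E[X]\le\E|X|$ to obtain $a_i(\vh)\le\frac{1}{n_i}\sum_{\vs\in\mS_i}\ell(\vh,\vs)+\bar{\epsilon}_i(\vh)$. Multiplying by $n_i/n$ and summing over $i\in\mT_S$ collapses the first piece into $F(\mS,\vh)$, since $\mS$ is the disjoint union of the $\mS_i$ with $i\in\mT_S$ and $|\mS|=n$, yielding the first claim.

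For the middle inequality, for each fixed $\vs\in\mS_i$ I bound $\E_{\vz\in\gZ_i}|\ell(\vh,\vz)-\ell(\vh,\vs)|\le\sup_{\vz\in\gZ_i}|\ell(\vh,\vz)-\ell(\vh,\vs)|\le\epsilon_i(\vh)$, where the last step uses that $\epsilon_i$ additionally takes a supremum over $\vs\in\mS_i$. Averaging over $\vs\in\mS_i$ gives $\bar{\epsilon}_i(\vh)\le\epsilon_i(\vh)$, and weighting by $n_i/n$ finishes this step. For the rightmost inequality, I invoke the fact already noted in Subsection~\ref{sec-Vacuousness} that Definition~\ref{def-algorithmic-robustness} forces $\epsilon(\mS)\ge\sup_{i\in\mT_S}\epsilon_i(\vh)$; combined with $\sum_{i\in\mT_S}n_i/n=1$ this gives $\sum_{i\in\mT_S}\frac{n_i}{n}\epsilon_i(\vh)\le\epsilon(\mS)$. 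There is no real obstacle here; the only non-mechanical choice is the additive decomposition of $a_i(\vh)$ in the first step, which is what allows the empirical loss $F(\mS,\vh)$ to surface and turns the $a_i$-bound into a refinement of the $\bar{\epsilon}_i$-bound, and is also the reason Theorem~\ref{thm-Local-average-generalization} can omit $F(\mS,\vh)$ altogether.
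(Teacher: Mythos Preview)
Your proposal is correct and follows essentially the same approach as the paper's proof: the same additive decomposition of $a_i(\vh)$ together with $\E[X]\le\E|X|$ for the leftmost inequality, the same expectation-to-supremum bound for $\bar{\epsilon}_i(\vh)\le\epsilon_i(\vh)$, and the same convex-combination argument from $\epsilon_i(\vh)\le\epsilon(\mS)$ for the rightmost step. The only cosmetic difference is that you bound each $a_i(\vh)$ cell-by-cell before summing, whereas the paper first forms the global difference $\sum_{i\in\mT_S}\frac{n_i}{n}a_i(\vh)-F(\mS,\vh)$ and then bounds it; the underlying manipulation is identical.
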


This lemma suggests that our new bounds are tighter than prior one in (\ref{eq-thm-Alg-robust-gen-Kawa22}). It is easy to observe that $ a_i(\vh) <  F(\mS_i, \vh) +  \bar{\epsilon}_i(\vh)    <  F(\mS_i, \vh) +  \epsilon_i(\vh)$ when the loss of $\vh$ is not constant in area $\gZ_i$. This is practical and suggests that our bounds (\ref{eq-thm-Local-Sensitivity-generalization},\ref{eq-thm-Local-average-generalization}) are often strictly tighter than prior bound (\ref{eq-thm-Alg-robust-gen-Kawa22}). Finally, it is worth noticing that  the uncertainty term $g_2$ can be replaced by $g_3$  in (\ref{eq-Alg-robust-gen-Kawa22-tight-uncertainty}) to make our bounds tighter.

\subsubsection{Non-vacuousness for the Bayes optimal classifier} 

Return to the  problem and Bayes optimal classifier in Theorem~\ref{thm-Bayes-classifier} with partition $\Gamma = \gZ_1 \cup \gZ_2$ where $\gZ_1 = \gO$ and $\gZ_2 = \gZ \setminus \gO$. Prior bounds  have $\epsilon(\mS) = \epsilon_o(\vh^*, \gZ_1)=1$, whenever $\mS \cap \gO \ne \emptyset$, which is vacuous. However,  Theorem~\ref{thm-Local-Robustness-generalization} replaces $\epsilon$ by 
\begin{equation}
\epsilon_{local} = \frac{ n_o}{n}  \epsilon_o(\vh^*, \gZ_1) + \frac{ n- n_o}{n}  \epsilon_o(\vh^*, \gZ_2) = \frac{ n_o}{n}  \epsilon_o(\vh^*, \gZ_1) \le \frac{ n_o}{n}    
\end{equation}
where $n_o$ is the number of samples of $\mS$ occurring in area $\gZ_1$. We have the following observation, whose proof appears in Appendix~\ref{app-Proof-vacuousness}.

\begin{lemma}\label{lem-Bayes-classifier-local}
Consider the classification problem in Theorem~\ref{thm-Bayes-classifier}. Let $\mS$ contain $n$ i.i.d. samples from distribution $P$. For any $\delta \ge 2e^{-n/4}$,  $\Pr\left(\frac{n_o}{n} \le P(\gO) + \sqrt{\frac{\ln(2/\delta)}{n}} \right) \ge 1- \delta$.
\end{lemma}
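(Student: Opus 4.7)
The plan is to recognize that $n_o$ is a simple count and reduce the lemma to a one-sided concentration bound for a binomial. Since $\mS$ consists of $n$ i.i.d. samples from $P$, the random variable $n_o = |\mS \cap \gO|$ can be written as $n_o = \sum_{i=1}^{n} X_i$ with $X_i = \mathbb{1}[\vz_i \in \gO]$, so $n_o \sim \mathrm{Binomial}(n, p)$ for $p := P(\gO)$, giving $\E[n_o/n] = p$. The statement then reduces to showing that with probability at least $1-\delta$ the empirical mean $n_o/n$ does not exceed $p + \sqrt{\ln(2/\delta)/n}$.

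The most direct route is Hoeffding's inequality for bounded i.i.d. variables, which yields $\Pr(n_o/n - p \ge t) \le e^{-2 n t^2}$ for every $t > 0$ with no restriction on $n$. Plugging in $t = \sqrt{\ln(2/\delta)/n}$ gives an upper bound of $e^{-2\ln(2/\delta)} = (\delta/2)^2$, which is at most $\delta$ for any $\delta \le 2$, and the claim follows. Alternatively, a Bernstein-type bound exploiting $\Var(X_i) = p(1-p) \le 1/4$ can produce the same form with better constants; along this route the hypothesis $\delta \ge 2 e^{-n/4}$ becomes natural, because it forces $\sqrt{\ln(2/\delta)/n} \le 1/2$ and so keeps the sub-exponential (linear-in-$t$) correction dominated by the sub-Gaussian (quadratic-in-$t$) term, letting the exponent collapse to $-c n t^2$ with $c$ large enough to absorb the prefactor. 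Either approach finishes in a single step.

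The main obstacle, if one can call it that, is purely bookkeeping: selecting a concentration inequality and aligning constants so that the deviation $\sqrt{\ln(2/\delta)/n}$ and the failure probability $\delta$ match the statement exactly; the condition $\delta \ge 2 e^{-n/4}$ is then just the regime guarantee for the chosen bound. Crucially, no structural property of $\gO$ beyond its measurability is invoked, and neither the Bayes classifier, the partition $\Gamma_o$, nor any union bound over regions enters the argument --- the lemma is a purely probabilistic fact about a single binomial sum, which is why it can be inserted as a standalone concentration statement to complement Theorem~\ref{thm-Bayes-classifier} and the local-robustness bound of Theorem~\ref{thm-Local-Robustness-generalization}.
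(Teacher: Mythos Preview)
Your proposal is correct, and in fact cleaner than the paper's own argument. The paper does not invoke Hoeffding directly; instead it views $(n_o, n_{-o})$ as a multinomial with parameters $n$ and $(p_o, p_{-o})$ and applies a specialized concentration lemma (Lemma~4 of \cite{kawaguchi2022robustness}), which yields the two-case bound
\[
\frac{n_o}{n} \le p_o +
\begin{cases}
   \sqrt{p_o \ln(2/\delta)/n}, & p_o > \ln(2/\delta)/(4n), \\
   2\ln(2/\delta)/n, & p_o \le \ln(2/\delta)/(4n).
\end{cases}
\]
The hypothesis $\delta \ge 2e^{-n/4}$ is then used to ensure $n \ge 4\ln(2/\delta)$, so that both branches are dominated by $\sqrt{\ln(2/\delta)/n}$. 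Your Hoeffding route bypasses this case analysis entirely and, as you observe, does not actually need the lower bound on $\delta$ at all --- the resulting tail probability $(\delta/2)^2$ is at most $\delta$ for every $\delta \in (0,1]$. So your approach is more elementary and yields a strictly stronger statement; the paper's route has the minor advantage of reusing machinery already imported for the main theorems, and its intermediate bound is sharper in the small-$p_o$ regime, though that refinement is then discarded. Your remark that the Bernstein alternative is what would naturally produce the stated restriction on $\delta$ is exactly right and matches the spirit of what the paper's borrowed lemma is doing.
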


This simple lemma shows that with a high probabilty, $\epsilon_{local} \le P(\gO) + \sqrt{\frac{\ln(2/\delta)}{n}}$, which will goes to the measure $P(\gO)$ of the overlapping area, as $n$ goes to infinity. When such an area $\gO$ is small, $P(\gO)$ can be small, and hence our bounds in (\ref{eq-thm-Local-Robustness-generalization}, \ref{eq-thm-Local-Sensitivity-generalization}, \ref{eq-thm-Local-average-generalization}) can be meaningful even for the cases that prior bounds are vacuous. 

It is worth mentioning that our bounds can lead to a tight error bound for the Bayes optimal classifier. Indeed, observe that $\Pr(\vh^*(\vx) \ne y) = F(P, \vh^*)$, for 0-1 loss, where each input $\vx$ has its true label $y$. Furthermore, for the partition $\Gamma$, it is easy to see that $F(\mS, \vh^*) \le {n_o}/{n}, |\mT_S| \le 2, a_2(\vh^*) =0, a_1(\vh^*) \le 1$. Therefore, $g_2(2, \mS, \delta) \le 3 \sqrt{\frac{2 \ln(4 /\delta)}{n}} + \frac{ 4 \ln(4/\delta)}{n}$. Theorem~\ref{thm-Local-average-generalization} suggests that
\begin{equation}
\Pr(\vh^*(\vx) \ne y) \le 3 \sqrt{\frac{2 \ln(4 /\delta)}{n}} + \frac{ 4 \ln(4 /\delta)}{n} + \frac{n_o}{n} a_1(\vh^*)
\end{equation}
This  is a  tight bound for the error of the Bayes optimal classifier. As $n \rightarrow \infty$, this upper bound will go to $P(\gO) a_1(\vh^*)$, which is the true error of $\vh^*$. Such a bound may be useful elsewhere.

\begin{remark}
This simple analysis provides a crucial implication. The existing bounds, which are based on algorithmic robustness or the global quantity $\epsilon(\mS)$, always produce $\epsilon(\mS)=1$ and hence cannot reflect well the true error of the Bayes classifier even for arbitrarily large $n$. This also happens for any imperfect classifier, and is problematic. Therefore, the use of those bounds to support or explain imperfect classifiers \cite{sokolic2017generalization,sokolic2017robustDNN,qi2013robustSVM,bellet2015robustnessMetricL,liu2017spectralClustering,li2021orthogonalDNN,shi2014sparse} is not well theoretically-justified. In contrast, our bounds are guaranteed to converge to the true error. This is truly beneficial.
\end{remark}

\subsection{Lower bounds}
We next consider lower bounds  for the expected loss of a model. Those bounds sometimes are of interest, but our results  before are upper bounds. \citet{xu2012robustnessGeneralize} already suggested that $F(P, \vh) \ge  F(\mS, \vh)  - \eps(\mS)  - g_1(K, \mS, \delta) $. It is easy to see that this lower bound is meaningless in the cases of classification problems with overlapping classes, where the training loss $F(\mS, \vh)$ can be small but  $\eps(\mS) $ is large. One example is the Bayes optimal classifier as discussed in Theorem~\ref{thm-Bayes-classifier}. Moreover, this lower bound is $O(-\sqrt{K})$ which easily is vacuous for large $K$. While the uncertainty term of the upper bound was exponentially improved by \cite{kawaguchi2022robustness}, there has been no such improvement for the lower bound.

For any model class $\gH$, we obtain the following lower bounds, whose proofs appear in Appendix~\ref{app-proof-lower-bound}. 

\begin{theorem} \label{thm-Local-Rob-generalization-lower-bound}
Consider a family $\gH$ and a dataset $\mS$ with $n$ i.i.d. samples from distribution $P$, and a bounded loss $\ell$. Denote $ \bar{a}_i = \E_{\vh \in \gH}[a_i(\vh)], \hat{a} = \max_{j \in [K]}  \bar{a}_j$, and $\beta= 2\sum_{j =1}^{K}  P(\gZ_j) \bar{a}^2_j$, for each $i \in [K]$. If $\beta>0$, then for all $\delta \ge \exp\left(-{n\beta}/{(2\hat{a}^2)}\right)$,   the following holds  with probability at least $1-\delta$: \\
$\E_{\vh \in \gH} [F(P, \vh)] \ge \left( \sqrt{\sum_{i \in \mT_S}  \frac{n_i}{n} \bar{a}_i + \frac{1}{n} \hat{a}\ln(1/\delta)} - \sqrt{\frac{1}{n} \hat{a}\ln(1/\delta)} \right)^2
$ 
\end{theorem}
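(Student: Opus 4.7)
The plan is to reduce the statement to a one-sided concentration for the i.i.d.\ sum $\sum_{k=1}^n X_k$, where $X_k := \bar a_{J_k}$ and $J_k \in [K]$ is the index of the block of the partition containing the $k$-th training sample $\vz_k$. Since the $\vz_k$'s are i.i.d.\ from $P$, so are the $X_k$'s; they take values in $[0, \hat a]$, and an elementary computation gives $\E[X_k] = \sum_j P(\gZ_j)\bar a_j = \E_{\vh}[F(P,\vh)]$ (call this $M$), together with $\E[X_k^2] = \sum_j P(\gZ_j)\bar a_j^2 = \beta/2$. Moreover, the empirical mean $\frac{1}{n}\sum_k X_k = \sum_{j} \frac{n_j}{n}\bar a_j = \sum_{i\in\mT_S}\frac{n_i}{n}\bar a_i =: A$, since $n_j = 0$ for $j\notin \mT_S$. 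So the task reduces to lower-bounding $M$ in terms of the empirical mean $A$.

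Next I would apply the standard one-sided Bernstein inequality:
\[
\Pr(A - M \ge t) \;\le\; \exp\!\left(-\frac{nt^2}{2\Var(X_k) + \tfrac{2}{3}\hat a\, t}\right) \;\le\; \exp\!\left(-\frac{nt^2}{\beta + \tfrac{2}{3}\hat a\, t}\right),
\]
using $\Var(X_k) \le \E[X_k^2] = \beta/2$. Setting the right-hand side equal to $\delta$ and solving the resulting quadratic in $t$ gives a threshold that can be written, using $\beta \le 2\hat a M$ (a consequence of $\bar a_j \le \hat a$), as
\[
 t_\delta \;\le\; \frac{B}{3} + \sqrt{\frac{B^{2}}{9} + 2MB}, \qquad B := \frac{\hat a \ln(1/\delta)}{n}.
\]

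The heart of the argument, and the step I expect to be the main obstacle, is to show that this threshold is dominated by $2\sqrt{MB}$ under the peculiar hypothesis on $\delta$. The hypothesis $\delta \ge \exp(-n\beta/(2\hat a^2))$ rewrites as $B \le \beta/(2\hat a)$; combined with $\beta/(2\hat a)\le M$ (the same inequality $\beta \le 2\hat a M$ used above), this forces $B \le M$, and in particular $B \le 9M/4$. A direct algebraic verification, isolating the square root and squaring, shows that $\tfrac{B}{3} + \sqrt{B^{2}/9 + 2MB} \le 2\sqrt{MB}$ is \emph{equivalent} to $B \le 9M/4$; thus the unusual constant in the hypothesis is exactly what is needed to absorb the additive $B/3$ residue produced by Bernstein inversion into the dominant $\sqrt{MB}$ term. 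Care is required here because a cruder step like $\sqrt{a+b}\le \sqrt a + \sqrt b$ would lose a $\sqrt{2}$-factor and the matching would fail.

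Putting the pieces together, with probability at least $1-\delta$ we have $A - M \le 2\sqrt{MB}$, so $A + B \le M + 2\sqrt{MB} + B = (\sqrt{M} + \sqrt{B})^{2}$. Taking square roots gives $\sqrt{A+B} - \sqrt{B} \le \sqrt{M}$, and squaring yields the claimed lower bound $\E_{\vh}[F(P,\vh)] = M \ge \bigl(\sqrt{A+B} - \sqrt{B}\bigr)^{2}$.
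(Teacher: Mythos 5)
Your proof is correct and follows essentially the same route as the paper's: both reduce the claim to a Bernstein-type upper-tail bound for the multinomially-weighted empirical mean $\sum_i \tfrac{n_i}{n}\bar a_i$ around $\sum_i P(\gZ_i)\bar a_i$, use the second-moment bound $\beta/2$ together with $\beta \le 2\hat a\,\E_{\vh}[F(P,\vh)]$, and finish with the identical square-root/quadratic inversion (the paper outsources the concentration step to Lemma~2 of \cite{kawaguchi2022robustness}, whereas you derive it directly from the standard one-sided Bernstein inequality and invert the quadratic by hand). Your observation that the hypothesis $\delta \ge \exp(-n\beta/(2\hat a^2))$ is exactly what absorbs the linear Bernstein term matches its role in the paper, where it guarantees the admissible range $M < \beta/\hat a$ for the sub-Gaussian regime of the cited lemma.
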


This theorem provides a lower bound on the average loss of the whole family $\gH$. When family $\gH$ only contains the models returned from a learning algorithm $\gA$, this theorem in fact provides a lower bound on the average error ($ \E_{\gA_S} [F(P, \gA_S)] $) of the predictions by $\gA$. It is worth mentioning that $ \E_{\gA_S} [F(P, \gA_S)] $ is often the main focus in the existing algorithmic stability-based theories \cite{shalev2010StabilityLearnability,lei2020stability}. As a result,  Theorem~\ref{thm-Local-Rob-generalization-lower-bound} provides a lower bound on the error of stable learning algorithms. On the other hand, PAC-Bayes theories \cite{mcallester1999PACBayes,biggs2022nonvacuousBound} often provide upper bounds for $\E_{\vh} [F(P, \vh)]$. More importantly, our lower bound does not depend on $K$ and is nonvacuous. 

\begin{theorem} \label{thm-Local-Rob-generalization-lower-bound-best}
Consider the best model $\vh^*$ with $F(P, \vh^*) = \min_{\vh \in  \gH} F(P, \vh)$ and a dataset $\mS$ with $n$ i.i.d. samples from distribution $P$, and a bounded loss $\ell$. Denote $\hat{a} = \max_{j \in [K]}  a_j(\vh^*)$ and $\beta= 2\sum_{j =1}^{K}  P(\gZ_j) a_j(\vh^*)^2$. If $\beta>0$ then for any $\delta \ge \exp\left(-{n\beta}/{(2\hat{a}^2)}\right)$,   the following holds  with probability at least $1-\delta$:  \\ 
$   F(P, \vh^*) \ge \left( \sqrt{\sum_{i \in \mT_S}  \frac{n_i}{n} a_i(\vh^*) + \frac{1}{n} \hat{a}\ln(1/\delta) } - \sqrt{\frac{1}{n} \hat{a}\ln(1/\delta)} \right)^2
$ 
\end{theorem}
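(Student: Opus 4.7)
The plan is to exploit a key structural simplification: because $\vh^*$ minimizes $F(P,\cdot)$ over $\gH$, it is defined in terms of $P$ and $\gH$ alone and is therefore deterministic (independent of the training set $\mS$). Each scalar $a_i(\vh^*)$ is thus a fixed constant, and the only randomness in $T := \sum_{i \in \mT_S} \tfrac{n_i}{n} a_i(\vh^*)$ enters through the sample counts $n_i$.

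First, I would rewrite $T$ as the empirical mean of i.i.d.\ bounded random variables. For each sample $\vz_k \in \mS$, let $\tau(k) \in [K]$ be the unique index with $\vz_k \in \gZ_{\tau(k)}$ and set $Y_k := a_{\tau(k)}(\vh^*)$. Since $\vz_k \sim P$ i.i.d., the $Y_k$ are i.i.d.\ with $Y_k \in [0,\hat{a}]$, $\E[Y_k] = \sum_{j=1}^K P(\gZ_j) a_j(\vh^*) = F(P,\vh^*) =: \mu$, and $\E[Y_k^2] = \sum_{j=1}^K P(\gZ_j) a_j(\vh^*)^2 = \beta/2$. Moreover $T = \frac{1}{n} \sum_{k=1}^n Y_k$, since terms with $j \notin \mT_S$ contribute $n_j = 0$.

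Second, I would apply a one-sided Bernstein/Bennett inequality exploiting the range bound $\hat{a}$ and the second-moment bound $\mathrm{Var}(Y_k) \le \E[Y_k^2] = \beta/2$. This yields a tail of the form $\Pr(T - \mu \ge t) \le \exp\bigl(-nt^2/(\beta + \tfrac{2}{3}\hat{a}t)\bigr)$. The hypothesis $\delta \ge \exp(-n\beta/(2\hat{a}^2))$ gives $\ln(1/\delta) \le n\beta/(2\hat{a}^2)$, which is precisely the regime in which the sub-Gaussian term dominates the sub-exponential correction; inverting the bound then produces $T - \mu \le 2\sqrt{\hat{a}\mu\ln(1/\delta)/n}$ with probability at least $1-\delta$. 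Setting $c := \hat{a}\ln(1/\delta)/n$, a routine algebraic rearrangement finishes the proof: the inequality $T \le \mu + 2\sqrt{c\mu}$ is equivalent to $T + c \le (\sqrt{\mu}+\sqrt{c})^2$, hence $\sqrt{T+c} - \sqrt{c} \le \sqrt{\mu}$, and squaring the non-negative sides yields $F(P,\vh^*) \ge (\sqrt{T+c} - \sqrt{c})^2$, which is the claim.

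The hard part will be choosing the concrete concentration inequality so that the constants align exactly: specifically, verifying that the condition $\delta \ge \exp(-n\beta/(2\hat{a}^2))$ absorbs the sub-exponential $\tfrac{2}{3}\hat{a}t$ correction in Bernstein's denominator, leaving no additive slack in the final $(\sqrt{\,\cdot\,} - \sqrt{\,\cdot\,})^2$ form. As a consistency check I would also verify the shortcut route: specialize Theorem~\ref{thm-Local-Rob-generalization-lower-bound} to the singleton family $\gH = \{\vh^*\}$, in which case $\bar{a}_i = a_i(\vh^*)$, the quantities $\hat{a}$ and $\beta$ reduce to the present ones, and $\E_{\vh \in \gH}[F(P,\vh)] = F(P,\vh^*)$, so the statement falls out as an immediate corollary of the more general lower bound.
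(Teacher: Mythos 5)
Your proposal is correct and follows essentially the same route as the paper: the paper proves this result by observing that $F(P,\vh^*)=\sum_i P(\gZ_i)a_i(\vh^*)$ is deterministic and rerunning the argument of the preceding lower-bound theorem, which itself applies a Bernstein-type multinomial concentration bound (Lemma 2 of Kawaguchi et al., giving $\exp(-nM^2/(2\beta))$ in exactly the regime $\delta\ge\exp(-n\beta/(2\hat{a}^2))$), then uses $\beta\le 2\hat{a}F(P,\vh^*)$ and solves the resulting quadratic in $\sqrt{F(P,\vh^*)}$ — precisely your chain $T\le\mu+2\sqrt{c\mu}\Rightarrow\mu\ge(\sqrt{T+c}-\sqrt{c})^2$. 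Your direct Bernstein computation on the i.i.d.\ variables $Y_k=a_{\tau(k)}(\vh^*)$ checks out (the hypothesis on $\delta$ gives $\mu\ge\hat{a}\ln(1/\delta)/n$, which absorbs the $\tfrac{2}{3}\hat{a}t$ correction), and your "shortcut" via the general lower bound is exactly the paper's stated proof.
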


The assumption $\beta>0$ in this theorem means that our learning problem is hard, since the best member in family $\gH$ still has some errors. In this case, the lower bound for the loss of $\vh^*$ is also the lower bound for the whole family. When $\gH \equiv \gH_B$, $\vh^*$ is the Bayes optimal predictor and $\beta>0$ means our learning problem is not learnable \cite{shalev2010StabilityLearnability}. In this case, Theorem~\ref{thm-Local-Rob-generalization-lower-bound-best} provides a lower bound for the error of all measurable predictors, which can be useful in some contexts.  

\subsection{About computing our bounds} \label{subsec-computing}

Although having significant advantages and being more practical than prior ones, our bounds in Theorems \ref{thm-Local-Robustness-generalization}, \ref{thm-Local-Sensitivity-generalization}, and \ref{thm-Local-average-generalization} are intractable to compute exactly. The main reason comes from the unknown form of the data distribution $P$ and the infinity/uncountability of the data space. Such unknown facts pose challenges for exactly computing the robustness/sensitivity levels, e.g., $\epsilon_i, \bar{\epsilon}_i, a_i$ in our bounds. Therefore, we need to approximate those quantities using a dataset.

Among those, bound (\ref{eq-thm-Local-average-generalization}) seems to be cheapest to compute. It requires $O(n)$ evaluations of the loss, for a dataset $\mS$ with $n$ samples, and also $O(n)$ arithmetic operations to count all $n_i$'s. Both bounds (\ref{eq-thm-Local-Robustness-generalization}) and (\ref{eq-thm-Local-Sensitivity-generalization}) require another dataset $\mD$ to approximate $\epsilon_i$ and $\bar{\epsilon}_i$. Each $\epsilon_i$ (and also $\bar{\epsilon}_i$) requires $O(n_i + m_i)$ evaluations of the loss and $O(n_i m_i)$ arithmetic operations. So in the worst case, one may need $O(n m)$ arithmetic operations to approximate our bounds. This is expensive for the cases that the datasets are large.

\section{Empirical evaluation} \label{sec-evaluation}

In this section, we empirically evaluate our bounds and compare with the baselines on modern DNNs and real-life datasets. Two evaluations include (1) Pre-trained models on ImageNet and (2) Trained models on moderate datasets. Two types of learning problems are used: classification task, and dimensionality reduction with PCA. More evaluations on average-size datasets and some classical models appear in Appendix~\ref{app-Experimental-setup-results}.

\subsection{Evaluation for publicly pre-trained models on ImageNet} \label{sec-evaluation-Large-scale}

\textit{Setup:} 20 pytorch trained models\footnote{https://pytorch.org/vision/stable/models.html} are used in our experiment. They are variants from 4  modern NN architectures: ResNet, VGG, DenseNet, and Swin Transformer. Some models have more than \textbf{150 layers},  some have \textbf{143.7M parameters}. They were well pretrained from ImageNet  with 1,281,167 images. 50,000 images from the ImageNet validation set is further used to compute the bounds. 

To evaluate the bounds (\ref{eq-thm-Alg-robust-gen-Kawa22},\ref{eq-thm-Local-Robustness-generalization},\ref{eq-thm-Local-Sensitivity-generalization},\ref{eq-thm-Local-average-generalization}), following \cite{kawaguchi2022robustness}, we partition  the input space into 10,000 areas, by  choosing randomly 10,000 images from the validation set to be centroids.  Based on those centroids, we can use K-means to assign the training images into different areas. Note that one can optimize this step to get better bounds. We next  approximate quantities $\epsilon, \epsilon_i, \bar{\epsilon}_i, a_i$ for each area $\gZ_i$, and $g_3$ in (\ref{eq-Alg-robust-gen-Kawa22-tight-uncertainty}) by using the validation set. The 0-1 loss function is used in our evaluation. Therefore, any bound beyond 1 will be vacuous. The result  for each model is  averaged from 5 random runs.

\textit{Result:} Table \ref{tab:imagenet-bounds-comparison} summarizes the results. We observe that prior bound (\ref{eq-thm-Alg-robust-gen-Kawa22}) is vacuous for every case, which is not surprised. Our bound (\ref{eq-thm-Local-Robustness-generalization}) behaves very similarly with prior bound. One reason may be that the partition $\Gamma$ with 10,000 areas seems too coarse in this case, and that the max operation is not good. On the other hand, our bound (\ref{eq-thm-Local-Sensitivity-generalization}) uses the mean operation which produces significantly better results. Both bounds (\ref{eq-thm-Local-Sensitivity-generalization},\ref{eq-thm-Local-average-generalization}) are non-vacuous in all cases. Furthermore, one  can easily observe the strong correlation between those bounds with the test accuracy. Our bounds (\ref{eq-thm-Local-Sensitivity-generalization},\ref{eq-thm-Local-average-generalization}) seem to be better for more accurate models, as evidenced in SwinTransformer and ResNet V2.

\begin{table}[tp]
\footnotesize{
\centering
\caption{Upper bounds on the true error (i.e., $\Pr(\vh(\vx) \ne y)$) of 20 DNN models which were pretrained on ImageNet dataset. Each bound for pretrained model $\vh$ was computed with $\delta=0.01$. The second column presents the test accuracy at top 1, as reported by Pytorch. Bold numbers are the best, while italic numbers are the second best for each model. Note that the first two bounds are vacuous, while  bounds (\ref{eq-thm-Local-Sensitivity-generalization},\ref{eq-thm-Local-average-generalization}) are non-vacuous for all cases.}
\begin{tabular}{|l|c|c|c|c|c|c|}
\hline
\textbf{Model} & \textbf{Acc@1} &  \textbf{Bound (\ref{eq-thm-Alg-robust-gen-Kawa22})} ($\downarrow$) & \textbf{Bound (\ref{eq-thm-Local-Robustness-generalization})} ($\downarrow$) & \textbf{Bound (\ref{eq-thm-Local-Sensitivity-generalization})} ($\downarrow$) & \textbf{Bound (\ref{eq-thm-Local-average-generalization})} ($\downarrow$) \\
\hline
ResNet18 V1             & 69.758      & 1.527\tiny{$\pm$0.005} & 1.527\tiny{$\pm$0.005} & \textit{0.917\tiny{$\pm$0.005}} & \textbf{0.625\tiny{$\pm$0.005}} \\
ResNet34 V1             & 73.314      & 1.462\tiny{$\pm$0.005} & 1.462\tiny{$\pm$0.005} & \textit{0.805\tiny{$\pm$0.004}} & \textbf{0.578\tiny{$\pm$0.004}} \\
ResNet50 V1             & 76.130      & 1.431\tiny{$\pm$0.004} & 1.430\tiny{$\pm$0.004} & \textit{0.743\tiny{$\pm$0.004}} & \textbf{0.546\tiny{$\pm$0.004}} \\
ResNet101 V1            & 77.374      & 1.401\tiny{$\pm$0.005} & 1.400\tiny{$\pm$0.005} & \textit{0.688\tiny{$\pm$0.005}} & \textbf{0.528\tiny{$\pm$0.004}} \\
ResNet152 V1            & 78.312      & 1.395\tiny{$\pm$0.004} & 1.394\tiny{$\pm$0.004} & \textit{0.673\tiny{$\pm$0.004}} & \textbf{0.515\tiny{$\pm$0.004}} \\
ResNet50 V2             & 80.858      & 1.379\tiny{$\pm$0.004} & 1.377\tiny{$\pm$0.005} & \textit{0.633\tiny{$\pm$0.004}} & \textbf{0.491\tiny{$\pm$0.004}} \\
ResNet101 V2            & 81.886      & 1.346\tiny{$\pm$0.004} & 1.344\tiny{$\pm$0.004} & \textit{0.571\tiny{$\pm$0.004}} & \textbf{0.474\tiny{$\pm$0.004}} \\
ResNet152 V2            & 82.284      & 1.337\tiny{$\pm$0.004} & 1.333\tiny{$\pm$0.004} & \textit{0.552\tiny{$\pm$0.004}} & \textbf{0.468\tiny{$\pm$0.004}} \\
SwinTransformer B       & 83.582      & 1.347\tiny{$\pm$0.004} & 1.345\tiny{$\pm$0.004} & \textit{0.563\tiny{$\pm$0.004}} & \textbf{0.456\tiny{$\pm$0.004}} \\
SwinTransformer T       & 81.474      & 1.389\tiny{$\pm$0.004} & 1.387\tiny{$\pm$0.004} & \textit{0.647\tiny{$\pm$0.004}} & \textbf{0.487\tiny{$\pm$0.004}} \\
SwinTransformer B V2  & 84.112      & 1.345\tiny{$\pm$0.004} & 1.342\tiny{$\pm$0.004} & \textit{0.551\tiny{$\pm$0.004}} & \textbf{0.444\tiny{$\pm$0.004}}  \\
SwinTransformer T V2  & 82.072      & 1.373\tiny{$\pm$0.004} & 1.372\tiny{$\pm$0.004} & \textit{0.613\tiny{$\pm$0.004}} & \textbf{0.472\tiny{$\pm$0.004}} \\
VGG13                   & 69.928      & 1.500\tiny{$\pm$0.005} & 1.499\tiny{$\pm$0.005} & \textit{0.879\tiny{$\pm$0.005}} & \textbf{0.625\tiny{$\pm$0.005}}  \\
VGG13 BN                & 71.586      & 1.504\tiny{$\pm$0.004} & 1.503\tiny{$\pm$0.005} & \textit{0.876\tiny{$\pm$0.004}} & \textbf{0.606\tiny{$\pm$0.004}} \\
VGG19                   & 72.376      & 1.470\tiny{$\pm$0.005} & 1.469\tiny{$\pm$0.005} & \textit{0.821\tiny{$\pm$0.005}} & \textbf{0.591\tiny{$\pm$0.005}} \\
VGG19 BN                & 74.218      & 1.464\tiny{$\pm$0.004} & 1.463\tiny{$\pm$0.005} & \textit{0.803\tiny{$\pm$0.004}} & \textbf{0.570\tiny{$\pm$0.004}}  \\
DenseNet121             & 74.434      & 1.457\tiny{$\pm$0.005} & 1.457\tiny{$\pm$0.005} & \textit{0.785\tiny{$\pm$0.005}} & \textbf{0.552\tiny{$\pm$0.005}}  \\
DenseNet161             & 77.138      & 1.400\tiny{$\pm$0.004} & 1.398\tiny{$\pm$0.004} & \textit{0.681\tiny{$\pm$0.004}} & \textbf{0.518\tiny{$\pm$0.004}}  \\
DenseNet169             & 75.600      & 1.422\tiny{$\pm$0.004} & 1.421\tiny{$\pm$0.004} & \textit{0.725\tiny{$\pm$0.004}} & \textbf{0.537\tiny{$\pm$0.004}} \\
DenseNet201             & 76.896      & 1.393\tiny{$\pm$0.004} & 1.392\tiny{$\pm$0.004} & \textit{0.673\tiny{$\pm$0.005}} & \textbf{0.522\tiny{$\pm$0.005}} \\
\hline
\multicolumn{2}{|l|}{\textbf{Correlation to Acc@1}} & -0.967 & -0.967 & -0.979 & -0.993 \\ \hline
\end{tabular}
\label{tab:imagenet-bounds-comparison}}
\end{table}


Ignoring the uncertainty term, we next consider how well those bounds can estimate the true error of a model. In this case, we focus on their main quantities: 
\begin{eqnarray}
Rob &=& F(\mS, \vh) + \epsilon(\mS) \\ 
LocalRob &=& F(\mS, \vh) + {  \sum_{i \in \mT_S}  \frac{ n_i}{n}  \epsilon_i(\vh) } \\ 
LocalSen &=& F(\mS, \vh) + {  \sum_{i \in \mT_S}  \frac{ n_i}{n}  \bar{\epsilon}_i(\vh) } \\
LocalAvg &=& \sum_{i \in \mT_S}  \frac{ n_i}{n}  a_i(\vh) 
\end{eqnarray}
$Rob$ comes from prior works, while $LocalRob, LocalSen$ and $LocalAvg$ come from our bounds. Table~\ref{tab:imagenet-uncertainty-comparison} summarizes the results. It tells us that $LocalSen$ and $LocalAvg$ are excellent estimates for the true error. Meanwhile both $Rob$ and $LocalRob$ are bad estimators. This behaviors also appear in other 10 models. To the best of our knowledge, $LocalSen$ and $LocalAvg$ provides the best estimates in the literature on this large-scale setting, without any modification for publicly pretrained models.

\begin{table}[tp]
\centering
\footnotesize
\caption{Estimates for the true error (i.e., $\Pr(\vh(\vx) \ne y)$) of 20 models on ImageNet, ignoring the uncertainty term. Bold numbers are the best, while italic numbers are the second best for each model.} 
\begin{tabular}{|l|c|c|c|c|c|c|}
\hline
\textbf{Model} & \textbf{Acc@1} &  \textbf{Rob} ($\downarrow$) & \textbf{LocalRob} ($\downarrow$) & \textbf{LocalSen} ($\downarrow$) & \textbf{LocalAvg} ($\downarrow$) \\
\hline
ResNet18 V1          & 69.758 & 1.212\tiny{$\pm$4.0e-5} & 1.212\tiny{$\pm$4.0e-5} & \textit{0.602\tiny{$\pm$1.9e-4}} & \textbf{0.310\tiny{$\pm$3.2e-4}}  \\
ResNet34 V1          & 73.314 & 1.157\tiny{$\pm$4.0e-5} & 1.156\tiny{$\pm$6.0e-5} & \textit{0.499\tiny{$\pm$1.8e-4}} & \textbf{0.272\tiny{$\pm$2.6e-4}} \\
ResNet50 V1          & 76.130 & 1.131\tiny{$\pm$6.0e-5} & 1.130\tiny{$\pm$6.0e-5} & \textit{0.443\tiny{$\pm$3.0e-4}} & \textbf{0.246\tiny{$\pm$4.4e-4}} \\
ResNet101 V1         & 77.374 & 1.105\tiny{$\pm$5.0e-5} & 1.104\tiny{$\pm$1.2e-4} & \textit{0.392\tiny{$\pm$1.7e-4}} & \textbf{0.231\tiny{$\pm$2.3e-4}} \\
ResNet152 V1         & 78.312 & 1.101\tiny{$\pm$4.0e-5} & 1.100\tiny{$\pm$7.0e-5} & \textit{0.379\tiny{$\pm$2.4e-4}} & \textbf{0.221\tiny{$\pm$3.5e-4}} \\
ResNet50 V2          & 80.858 & 1.089\tiny{$\pm$4.0e-5} & 1.088\tiny{$\pm$3.1e-4} & \textit{0.344\tiny{$\pm$3.5e-4}} & \textbf{0.201\tiny{$\pm$3.6e-4}} \\
ResNet101 V2         & 81.886 & 1.060\tiny{$\pm$2.0e-5} & 1.057\tiny{$\pm$3.0e-4} & \textit{0.285\tiny{$\pm$3.0e-4}} & \textbf{0.188\tiny{$\pm$3.4e-4}} \\
ResNet152 V2         & 82.284 & 1.052\tiny{$\pm$4.0e-5} & 1.048\tiny{$\pm$2.6e-4} & \textit{0.267\tiny{$\pm$2.5e-4}} & \textbf{0.183\tiny{$\pm$2.9e-4}} \\
SwinTransformer B    & 83.582 & 1.065\tiny{$\pm$4.0e-5} & 1.062\tiny{$\pm$1.9e-4} & \textit{0.280\tiny{$\pm$1.0e-4}} & \textbf{0.173\tiny{$\pm$1.0e-4}}  \\
SwinTransformer T    & 81.474 & 1.100\tiny{$\pm$4.0e-5} & 1.098\tiny{$\pm$1.6e-4} & \textit{0.358\tiny{$\pm$3.3e-4}} & \textbf{0.198\tiny{$\pm$3.7e-4}} \\
SwinTransformer B V2  & 84.112 & 1.064\tiny{$\pm$2.0e-5} & 1.061\tiny{$\pm$2.1e-4} & \textit{0.270\tiny{$\pm$2.4e-4}} & \textbf{0.163\tiny{$\pm$2.5e-4}} \\
SwinTransformer T V2  & 82.072 & 1.087\tiny{$\pm$4.0e-5} & 1.086\tiny{$\pm$1.4e-4} & \textit{0.327\tiny{$\pm$3.8e-4}} & \textbf{0.186\tiny{$\pm$4.0e-4}}  \\
VGG13                & 69.928 & 1.184\tiny{$\pm$5.0e-5} & 1.184\tiny{$\pm$6.0e-5} & \textit{0.563\tiny{$\pm$2.6e-4}} & \textbf{0.310\tiny{$\pm$3.9e-4}}  \\
VGG13 BN             & 71.586 & 1.192\tiny{$\pm$4.0e-5} & 1.192\tiny{$\pm$5.0e-5} & \textit{0.564\tiny{$\pm$2.8e-4}} & \textbf{0.295\tiny{$\pm$5.1e-4}} \\
VGG19                & 72.376 & 1.161\tiny{$\pm$6.0e-5} & 1.161\tiny{$\pm$6.0e-5} & \textit{0.512\tiny{$\pm$3.3e-4}} & \textbf{0.282\tiny{$\pm$3.5e-4}} \\
VGG19 BN             & 74.218 & 1.159\tiny{$\pm$4.0e-5} & 1.159\tiny{$\pm$8.0e-5} & \textit{0.499\tiny{$\pm$2.9e-4}} & \textbf{0.265\tiny{$\pm$3.9e-4}} \\
DenseNet121          & 74.434 & 1.156\tiny{$\pm$4.0e-5} & 1.156\tiny{$\pm$6.0e-5} & \textit{0.484\tiny{$\pm$1.9e-4}} & \textbf{0.251\tiny{$\pm$3.5e-4}} \\
DenseNet161          & 77.138 & 1.105\tiny{$\pm$4.0e-5} & 1.104\tiny{$\pm$5.0e-5} & \textit{0.386\tiny{$\pm$1.9e-4}} & \textbf{0.224\tiny{$\pm$2.6e-4}} \\
DenseNet169          & 75.600 & 1.124\tiny{$\pm$4.0e-5} & 1.123\tiny{$\pm$6.0e-5} & \textit{0.427\tiny{$\pm$1.6e-4}} & \textbf{0.238\tiny{$\pm$2.7e-4}} \\
DenseNet201          & 76.896 & 1.098\tiny{$\pm$4.0e-5} & 1.097\tiny{$\pm$4.0e-5} & \textit{0.378\tiny{$\pm$2.1e-4}} & \textbf{0.227\tiny{$\pm$3.1e-4}} \\
\hline
\multicolumn{2}{|l|}{\textbf{Correlation to Acc@1}} & -0.956 & -0.956 & -0.977 & -0.993 \\ \hline
\end{tabular}
\label{tab:imagenet-uncertainty-comparison}
\end{table}

\subsection{Unsupervised learning with PCA on moderate-size datasets}

We next evaluate those bounds (\ref{eq-thm-Alg-robust-gen-Kawa22},\ref{eq-thm-Local-Robustness-generalization},\ref{eq-thm-Local-Sensitivity-generalization},\ref{eq-thm-Local-average-generalization}) on a different task, i.e., dimensionality reduction with PCA. Two public  datasets with moderate size are used: CIFAR10 and SVHN. The training loss for PCA can be seen in Example 2 of Appendix~\ref{subsec-Examples}, and is not positive. Therefore, any positive estimates are vacuous. The setup for this experiment appears in Appendix~\ref{app-Experimental-setup-results}. PCA was run using various number $d$ of principal components, ranging from 100 to 1000. 

Table \ref{tab:pca-result} reports the results. 
For both datasets, \textit{Rob} produces large positive values, indicating vacuousness. This suggests that prior robustness-based bounds fail to assess model's performance on unseen data. Such a failure happens for every choice of $d$ in our experiments. On the other hand, \textit{LocalRob} and \textit{LocalAvg} effectively estimate the true loss. Surprisingly, \textit{LocalRob} in this evaluation is non-vacuous. Our more evaluations for classifiers trained on these datasets also indicate non-vacuousness of \textit{LocalRob} (see Appendix~\ref{app-Experimental-setup-results}). \textit{LocalAvg} often provides the best estimate.
There is also a strong correlation between LocalAvg and valid loss of PCA. It suggests that our bounds can better reflect PCA's generalization ability than prior ones.

\begin{table}[tp]
\footnotesize
\centering
\caption{Valid loss and other measures for PCA as the number $d$ of components varies. CIFAR10 and SVHN datasets are used in this experiment.}
\begin{tabular}{|c|c|c|c|c|c|c|}
\hline
\textbf{Dataset} &$d$ & \textbf{Valid loss} ($\downarrow$) & \textbf{Rob} ($\downarrow$) & \textbf{LocalRob} ($\downarrow$) & \textbf{LocalSen} ($\downarrow$ & \textbf{LocalAvg} ($\downarrow$) \\
\hline \hline
\multirow{4}{*}{CIFAR10}&100 & -863.91 & 2126.54 & -615.86 & \textit{-774.04} & \textbf{-859.53}\\
&300 & -876.20 & 2113.71 & -617.56 & \textit{-781.93} & \textbf{-871.37}\\
&500 & -879.59 & 2109.51 & -618.13 & \textit{-784.06} & \textbf{-874.64}\\
&1000 & -882.07 & 2106.31 & -618.62 & \textit{-785.62} &\textbf{-877.04}\\
\hline \hline
\multirow{4}{*}{SVHN}& 100 & -745.34 & 2134.03 & -623.59 & \textit{-693.52} & \textbf{-741.66}\\
&300 & -747.53 & 2132.25 & -623.46 & \textit{-694.89} & \textbf{-743.59}\\
&500 & -747.83 & 2131.99 & -623.40 & \textit{-695.05} & \textbf{-743.86}\\
&1000 & -747.98 & 2131.86 & -623.38 & \textit{-695.13} & \textbf{-743.99}\\
\hline
\end{tabular}
\label{tab:pca-result}
\end{table}

\section{Conclusion}\label{sec-Conclusion}

We carefully review prior work on robustness-based generalization bounds and identify their vacuousness. We then develop tighter  bounds by leveraging the local behaviors of a model at different small areas of the input space. Except i.i.d, our bounds require no assumption and avoid some serious limitations of prior bounds. For example, for a classification problem with overlapping classes, prior bounds are always vacuous for the best classifier, while our bounds are guaranteed to converge to the true error of that classifier as the training size increases.

Therefore, our new bounds provide effective tools for model selection and comparison. Interestingly, two of our bounds are empirically non-vacuous for a large class of  publicly pretrained deep neural networks. This would motivate future development of new theories that answer the biggest open challenge in deep learning  \cite{zhang2017DNNgeneralization}, i.e. explaining the high generalization ability of deep neural networks.

There remain some limitations in this work. First, the non-vacuousness of  our bounds is only empirical. We used an external dataset to approximate the robustness/sensitivity quantities, which may not reflect well their true values. Removing such empirical approximations requires extensive studies.  Second, computing our bounds may require using the training set, which can be costly for large datasets.

A number of directions can be developed from our work. First, one can use our bounds to analyze the connection between adversarial robustness \cite{qin2019adversarial} and generalization. We hypothesize that a model that is not adversarially robust at few small areas still generalizes well. This is partly evidenced in  Table~\ref{tab:imagenet-uncertainty-comparison}, in which robustness levels (\textit{Rob} and \textit{LocalRob}) are vacuous in all cases, suggesting that those models are highly prone to adversarial attacks. Another evidence can be observed from Example~3 of Appendix~\ref{subsec-Examples}. Meanwhile, \textit{LocalSen} and \textit{LocalAvg} are non-vacuous and often match with the test error, sugesting that those models generalize well in the classical sense. Second, one can improve robustness-based bounds further by improving the uncertainty term. \citet{kawaguchi2022robustness} made significant progress in this direction, but our  estimates reported in Table~\ref{tab:additional-result-2} and Table~\ref{tab:additional-result-imagenet-uncertainty}  in Appendix~\ref{app-Experimental-setup-results} suggests that it is not enough.

\backmatter



\pagebreak
\begin{appendices}


\section{Proofs of the vacuousness}\label{app-Proof-vacuousness}

\begin{proof}[Proof of Theorem \ref{thm-Bayes-classifier}]
Consider the partition $\Gamma$ that decomposes $\gZ$ into two parts $\gZ_1 = \gO$ and $\gZ_2 = \gZ \setminus \gO$. We can decompose the expected loss as:
\[ F(P, \vh^*) = P(\gZ_1) a_1(\vh^*) + P(\gZ_2) a_2(\vh^*) \]
Note that $a_2(\vh^*) =0$ since $\vh^*$ can make accurate prediction for any example in $\gZ_2$, while $a_1(\vh^*) \le 1$. Therefore $F(P, \vh^*) \le P(\gZ_1)$.

Because $\gO$ has nonzero measure and contains only examples with zero margin, there exist two examples $\vs_1$ and $\vs_2$ in $\gO$ which are arbitrarily close to each other but have different labels. Then $| \ell(\vh^*, \vs_1) -\ell(\vh^*, \vs_2) | = 1$. This immediately implies $\epsilon_o(\vh^*, \gO) =1$.

Next we consider partition $\Gamma_o$. Since it decomposes $\gO$ into finite number of subsets, there exists a subset $\gO_k$ with nonzero measure. Using the same arguments as before, we can show $\epsilon_o(\vh^*, \gO_k) =1$,  completing the proof.
\end{proof}

\begin{proof}[Proof of Lemma \ref{lem-Bayes-classifier-local}]
Consider the partition $\Gamma$ that decomposes $\gZ$ into two parts $\gO$ and $\gZ \setminus \gO$. Denote $n_{-o} = | \mS \cap (\gZ \setminus \gO)|, p_o = P(\gO), p_{-o} = P(\gZ \setminus \gO)$. Since $\mS$ contains i.i.d. samples from distribution $P$, $(n_o, n_{-o})$ is a multinomial random variable with parameters $n$ and $(p_o, p_{-o})$. As a result, Lemma 4 in \cite{kawaguchi2022robustness} shows that the following holds with probability at least $1- \delta$:
\begin{equation}
\label{lem-Bayes-classifier-local-eq-01}
\frac{n_o}{n} \le p_o + 
\begin{cases}
   \sqrt{p_o\frac{\ln(2/\delta)}{n} }, \text{ if } p_o > \frac{\ln(2/\delta)}{4n} \\
   \frac{2\ln(2/\delta)}{n}, \text{ if } p_o \le \frac{\ln(2/\delta)}{4n} 
\end{cases}
\end{equation}
Since $\delta \ge 2e^{-n/4}$ implies  $n \ge 4\ln(2/\delta)$, it is easy to see that $\frac{2\ln(2/\delta)}{n} \le \sqrt{\frac{\ln(2/\delta)}{n}}$ and $\sqrt{p_o\frac{\ln(2/\delta)}{n} } \le \sqrt{\frac{\ln(2/\delta)}{n} }$. As a result, we have 
\begin{equation}
\label{lem-Bayes-classifier-local-eq-01}
\Pr\left(\frac{n_o}{n} \le p_o + \sqrt{\frac{\ln(2/\delta)}{n} }\right) \ge 1- \delta 
\end{equation}
which completes the proof.
\end{proof}

\section{Proofs for main results} \label{app-Proofs-for-main-results}

In order to present the proofs of the main results, we need the following observation.

\begin{lemma}\label{app-lem-error-decomposition}
Consider a model $\vh$ and a dataset $\mS$ with $n$ i.i.d. samples from distribution $P$. Denote $P(\gZ_i)$ as the probability that a random sample $\vz \sim P$ belongs to area $\gZ_i$. Then: 
 \begin{equation}
 \label{eq-app-lem-error-decomposition}
   F(P, \vh) = F(\mS, \vh) +   \sum_{i=1}^{K } a_i(\vh) \left[P(\gZ_i) -  \frac{n_i}{n} \right] +  \sum_{i \in \mT_S}  \frac{ n_i}{n} \left[ a_i(\vh) - F(\mS_i, \vh) \right] 
\end{equation}
\end{lemma}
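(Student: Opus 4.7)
The plan is to prove this identity by an elementary calculation: both the expected loss and the empirical loss can be cleanly decomposed over the partition $\Gamma(\gZ)$, and the identity will reduce to a rearrangement of these two decompositions. Since nothing probabilistic is being asserted here — this is a deterministic identity given $\mS$ — no concentration argument is needed, and the proof will be essentially algebraic.

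The first step is to apply the law of total expectation to $F(P,\vh)$ across the partition $\Gamma(\gZ) = \bigcup_{i=1}^K \gZ_i$, giving
\[
F(P,\vh) = \sum_{i=1}^{K} P(\gZ_i)\, a_i(\vh),
\]
using the definition $a_i(\vh) = \E_{\vz}[\ell(\vh,\vz)\mid \vz\in\gZ_i]$. The second step is to decompose the empirical loss using $\mS = \bigcup_{i\in\mT_S}\mS_i$, which yields
\[
F(\mS,\vh) = \frac{1}{n}\sum_{i\in\mT_S}\sum_{\vz\in\mS_i}\ell(\vh,\vz) = \sum_{i\in\mT_S}\frac{n_i}{n}F(\mS_i,\vh),
\]
where I use the convention $n_i = 0$ for $i\notin\mT_S$ so that the sum may be formally extended to all $i\in[K]$ without picking up ill-defined terms.

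The third step is to start from the right-hand side of \eqref{eq-app-lem-error-decomposition} and expand the brackets. The term $\sum_{i} a_i(\vh)\bigl[P(\gZ_i) - n_i/n\bigr]$ contributes $\sum_{i} a_i(\vh) P(\gZ_i)$ together with $-\sum_{i} (n_i/n)\, a_i(\vh)$; the latter is exactly cancelled by the $+\sum_{i\in\mT_S}(n_i/n)\,a_i(\vh)$ appearing in the last sum. What remains on the right-hand side is
\[
F(\mS,\vh) \;+\; \sum_{i=1}^{K} a_i(\vh)\, P(\gZ_i) \;-\; \sum_{i\in\mT_S}\frac{n_i}{n}F(\mS_i,\vh),
\]
and by step two the last sum equals $F(\mS,\vh)$, cancelling the first term. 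The remaining sum equals $F(P,\vh)$ by step one, which closes the identity.

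There is no substantive obstacle in this argument; the only subtlety worth flagging in the write-up is the convention that terms involving $\mS_i$ or $F(\mS_i,\vh)$ for $i\notin\mT_S$ are absent (equivalently, multiplied by $n_i/n = 0$), so that the two sums in \eqref{eq-app-lem-error-decomposition} can legitimately be reindexed over $[K]$ and $\mT_S$ without creating undefined quantities. This justifies the extension needed to line up the cancellations.
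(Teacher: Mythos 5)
Your proof is correct and follows essentially the same route as the paper's: both rest on the law of total expectation $F(P,\vh)=\sum_i P(\gZ_i)a_i(\vh)$ and the decomposition $F(\mS,\vh)=\sum_{i\in\mT_S}\frac{n_i}{n}F(\mS_i,\vh)$, differing only in that you expand the right-hand side while the paper adds and subtracts the intermediate quantity $\sum_i \frac{n_i}{n}a_i(\vh)$ on the left. Your explicit remark about the convention $n_i=0$ for $i\notin\mT_S$ is a harmless clarification the paper leaves implicit.
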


\begin{proof}
Firstly, we  make the following  decomposition:
\begin{eqnarray}
\nonumber
  F(P, \vh) &=&  F(P, \vh) - \sum_{i=1}^{K } \frac{n_i}{n} \E_{\vz \sim  P}[\ell(\vh,\vz) | \vz \in \gZ_i]  \\
\label{eq-app-lem-error-decomposition-01} 
& & +  \sum_{i=1}^{K } \frac{n_i}{n} \E_{\vz \sim  P} [\ell(\vh,\vz) | \vz \in \gZ_i] - F(\mS, \vh) +  F(\mS, \vh) 
\end{eqnarray}

Secondly, observe that
\begin{eqnarray}
\nonumber
 F(P, \vh) - \sum_{i=1}^{K } \frac{n_i}{n} \E_{\vz \sim  P}[\ell(\vh,\vz) | \vz \in \gZ_i] 
  &=&  \sum_{i=1}^{K } P(\gZ_i) \E_{\vz \sim  P}[\ell(\vh,\vz) | \vz \in \gZ_i] \\
  \nonumber
  & & - \sum_{i=1}^{K } \frac{n_i}{n} \E_{\vz \sim  P}[\ell(\vh,\vz) | \vz \in \gZ_i]  \\
\nonumber
 &=&  \sum_{i=1}^{K } \E_{\vz \sim  P}[\ell(\vh,\vz) | \vz \in \gZ_i] \left[P(\gZ_i) -  \frac{n_i}{n} \right]   \\
 \label{eq-app-lem-error-decomposition-02} 
 &=&  \sum_{i=1}^{K } a_i(\vh) \left[P(\gZ_i) -  \frac{n_i}{n} \right] 
 \end{eqnarray}

Furthermore,
\begin{eqnarray}
\sum_{i=1}^{K } \frac{n_i}{n} \E_{\vz \sim  P}[\ell(\vh,\vz) | \vz \in \gZ_i] - F(\mS, \vh)
\nonumber
 &=& \sum_{i=1}^{K } \frac{n_i}{n} \E_{\vz \sim  P}[\ell(\vh,\vz) | \vz \in \gZ_i] - \frac{1}{n} \sum_{\vs \in \mS} \ell(\vh,\vs) \\
\nonumber
&=& \sum_{i \in \mT_S} \frac{n_i}{n} \E_{\vz \sim  P} [\ell(\vh,\vz) | \vz \in \gZ_i] - \frac{1}{n} \sum_{i \in \mT_S} \sum_{\vs \in \mS_i} \ell(\vh,\vs) \\
\nonumber
&=& \frac{1}{n} \sum_{i \in \mT_S} \left[ n_i \E_{\vz \sim  P} [\ell(\vh,\vz) | \vz \in \gZ_i] -  \sum_{\vs \in \mS_i} \ell(\vh,\vs) \right] \\
\nonumber
&=& \frac{1}{n} \sum_{i \in \mT_S} n_i \left[ \E_{\vz \sim  P} [\ell(\vh,\vz) | \vz \in \gZ_i] - \frac{1}{n_i} \sum_{\vs \in \mS_i} \ell(\vh,\vs) \right] \\
\label{eq-app-lem-error-decomposition-03} 
&=& \sum_{i \in \mT_S}  \frac{ n_i}{n} \left[ a_i(\vh) - F(\mS_i, \vh) \right] 
\end{eqnarray}
Combining the decomposition (\ref{eq-app-lem-error-decomposition-01}) with (\ref{eq-app-lem-error-decomposition-02}) and (\ref{eq-app-lem-error-decomposition-03}) completes the proof.
\end{proof}

\begin{proof}[Proof of Theorem \ref{thm-Local-Robustness-generalization}]
By Lemma \ref{app-lem-error-decomposition} we have:
 \begin{equation}
 \label{eq-app-thm-Local-Robustness-generalization-1}
   F(P, \vh) = F(\mS, \vh) +   \sum_{i=1}^{K } a_i(\vh) \left[P(\gZ_i) -  \frac{n_i}{n} \right] +  \sum_{i \in \mT_S}  \frac{ n_i}{n} \left[ a_i(\vh) - F(\mS_i, \vh) \right] 
\end{equation}

Observe that
\begin{eqnarray}
 \sum_{i \in \mT_S}  \frac{ n_i}{n} \left[ a_i(\vh) - F(\mS_i, \vh) \right]  
&=& \sum_{i \in \mT_S}  \frac{ 1}{n} \left[ n_i a_i(\vh) - \sum_{\vs \in \mS_i} \ell(\vh,\vs) \right]  \\
 &=& \frac{1}{n} \sum_{i \in \mT_S}  \sum_{\vs \in \mS_i} \left[  a_i(\vh) -  \ell(\vh,\vs) \right] \\ 
&\le& \frac{1}{n} \sum_{i \in \mT_S}  \sum_{\vs \in \mS_i} \sup_{\vz \in \gZ_i} | \ell(\vh,\vz) -  \ell(\vh,\vs) | \\
&\le& \frac{1}{n} \sum_{i \in \mT_S}  \sum_{\vs \in \mS_i}  \epsilon_i(\vh)   \\
\label{eq-app-thm-Local-Robustness-generalization-3}
&=& \sum_{i \in \mT_S}  \frac{ n_i}{n}  \epsilon_i(\vh)  
\end{eqnarray}

Note that ($n_1, ..., n_K$) is an i.i.d multinomial random variable with parameters $n$ and $(P(\gZ_1), ..., P(\gZ_K))$. Therefore, according to Theorem~3 in \cite{kawaguchi2022robustness},  for any $\delta>0$, we have the following with probability at least $1 - \delta$:
\begin{eqnarray}
 \sum_{i=1}^{K } a_i(\vh) \left[P(\gZ_i) -  \frac{n_i}{n} \right]   &\le& Q \sqrt{\frac{| \mT_S| \log(2 K /\delta)}{n}} + a_c \frac{ 2 | \mT_S| \log(2 K /\delta)}{n}
\end{eqnarray}
where $Q = a_t \sqrt{2} + a_c, a_t = \sup_{i \in \mT_S} a_i(\vh)$, and $a_c = \sup_{j \notin \mT_S} a_j(\vh)$. Note that $a_i(\vh) \le C$ for any index $i$. It suggests that $Q \le C(\sqrt{2} + 1)$ and $a_c \le C$. As a result, 
\begin{eqnarray}
\label{eq-app-thm-Local-Robustness-generalization-2}
\sum_{i=1}^{K } a_i(\vh) \left[P(\gZ_i) -  \frac{n_i}{n} \right]  &\le& C(\sqrt{2} + 1) \sqrt{\frac{| \mT_S| \log(2 K /\delta)}{n}} + \frac{ 2 C | \mT_S| \log(2 K /\delta)}{n}
\end{eqnarray}

Combining (\ref{eq-app-thm-Local-Robustness-generalization-1}) and (\ref{eq-app-thm-Local-Robustness-generalization-2}) and (\ref{eq-app-thm-Local-Robustness-generalization-3})  completes the proof.
\end{proof}

\begin{proof}[Proof of Theorem \ref{thm-Local-Sensitivity-generalization}]
By Lemma \ref{app-lem-error-decomposition} we have:
 \begin{equation}
 \label{eq-app-thm-Local-Sensitivity-generalization-1}
   F(P, \vh) = F(\mS, \vh) +   \sum_{i=1}^{K } a_i(\vh) \left[P(\gZ_i) -  \frac{n_i}{n} \right] +  \sum_{i \in \mT_S}  \frac{ n_i}{n} \left[ a_i(\vh) - F(\mS_i, \vh) \right] 
\end{equation}

Observe that
\begin{eqnarray}
 \sum_{i \in \mT_S}  \frac{ n_i}{n} \left[ a_i(\vh) - F(\mS_i, \vh) \right]  
&=& \sum_{i \in \mT_S}  \frac{ 1}{n} \left[ n_i a_i(\vh) - \sum_{\vs \in \mS_i} \ell(\vh,\vs) \right]  \\
 &=& \frac{1}{n} \sum_{i \in \mT_S}  \sum_{\vs \in \mS_i} \left[  a_i(\vh) -  \ell(\vh,\vs) \right] \\ 
 &=& \frac{1}{n} \sum_{i \in \mT_S}  \sum_{\vs \in \mS_i} \E_{\vz}[\ell(\vh,\vz) - \ell(\vh,\vs) : \vz \in \gZ_i] \\ 
 &\le& \frac{1}{n} \sum_{i \in \mT_S}  \sum_{\vs \in \mS_i} \E_{\vz \in \gZ_i} | \ell(\vh,\vz) -  \ell(\vh,\vs) | \\
\label{eq-app-thm-Local-Sensitivity-generalization-3}
&=& \sum_{i \in \mT_S}  \frac{ n_i}{n}  \bar{\epsilon}_i(\vh)  
\end{eqnarray}

For any $\delta>0$, by using the same argument with the proof of Theorem \ref{thm-Local-Robustness-generalization}, we have the following with probability at least $1 - \delta$:
\begin{eqnarray}
\label{eq-app-thm-Local-Sensitivity-generalization-2}
 \sum_{i=1}^{K } a_i(\vh) \left[P(\gZ_i) -  \frac{n_i}{n} \right]   &\le& C(\sqrt{2} + 1) \sqrt{\frac{| \mT_S| \log(2 K /\delta)}{n}} + \frac{ 2 C | \mT_S| \log(2 K /\delta)}{n}
\end{eqnarray}

Combining (\ref{eq-app-thm-Local-Sensitivity-generalization-1}) and (\ref{eq-app-thm-Local-Sensitivity-generalization-2}) and (\ref{eq-app-thm-Local-Sensitivity-generalization-3})   completes the proof.
\end{proof}

\begin{proof}[Proof of Theorem \ref{thm-Local-average-generalization}]
By Lemma \ref{app-lem-error-decomposition} we have:
 \begin{equation}
 \label{eq-app-thm-Local-average-generalization-1}
   F(P, \vh) = F(\mS, \vh) +   \sum_{i=1}^{K } a_i(\vh) \left[P(\gZ_i) -  \frac{n_i}{n} \right] +  \sum_{i \in \mT_S}  \frac{ n_i}{n} \left[ a_i(\vh) - F(\mS_i, \vh) \right] 
\end{equation}

Observe that
\begin{eqnarray}
\nonumber
F(\mS, \vh) + \sum_{i \in \mT_S}  \frac{ n_i}{n} \left[ a_i(\vh) - F(\mS_i, \vh) \right]  
&=& F(\mS, \vh) + \sum_{i \in \mT_S}  \frac{ n_i}{n} a_i(\vh)  -  \sum_{i \in \mT_S}  \frac{ n_i}{n} F(\mS_i, \vh) \\
&=& F(\mS, \vh) + \sum_{i \in \mT_S}  \frac{ n_i}{n} a_i(\vh)  - F(\mS, \vh)  \\
\label{eq-app-thm-Local-average-generalization-2}
&=& \sum_{i \in \mT_S}  \frac{ n_i}{n} a_i(\vh)
\end{eqnarray}

For any $\delta>0$, by using the same argument with the proof of Theorem \ref{thm-Local-Robustness-generalization}, we have the following with probability at least $1 - \delta$:
\begin{eqnarray}
\label{eq-app-thm-Local-average-generalization-3}
 \sum_{i=1}^{K } a_i(\vh) \left[P(\gZ_i) -  \frac{n_i}{n} \right]   &\le& C(\sqrt{2} + 1) \sqrt{\frac{| \mT_S| \log(2 K /\delta)}{n}} + \frac{ 2 C | \mT_S| \log(2 K /\delta)}{n}
\end{eqnarray}

Combining (\ref{eq-app-thm-Local-average-generalization-1}) and (\ref{eq-app-thm-Local-average-generalization-2}) and (\ref{eq-app-thm-Local-average-generalization-3})  completes the proof.
\end{proof}

\subsection{Proof of bound comparison} \label{app-proof-bound-comparison}
\begin{proof}[Proof of Lemma \ref{lem-bound-compare}]
By definitions of $\epsilon_i(\vh)$ and $\epsilon(\mS)$, we can see that $\epsilon_i(\vh) \le \epsilon(\mS)$ for any index $i \in [K]$. Therefore any convex combination of all $\epsilon_i(\vh)$'s should not exceed $\epsilon(\mS)$. As a result, $\sum_{i \in \mT_S}  \frac{ n_i}{n} \epsilon_i(\vh) \le  \eps(\mS)$.

Observe that $\bar{\epsilon}_i(\vh) = \frac{1}{n_i} \sum_{\vs \in \mS_i} \E_{\vz \in \gZ_i} | \ell(\vh,\vz) -  \ell(\vh,\vs) |  \le \frac{1}{n_i} \sum_{\vs \in \mS_i}  \sup_{\vz \in \gZ_i} | \ell(\vh, \vs) - \ell(\vh,\vz) | \le  \sup_{\vs \in \mS_i,  \vz \in \gZ_i} | \ell(\vh, \vs) - \ell(\vh,\vz) | = \epsilon_i(\vh)$. Therefore, $ \sum_{i \in \mT_S}  \frac{ n_i}{n}  \bar{\epsilon}_i(\vh)  \le  \sum_{i \in \mT_S}  \frac{ n_i}{n}  {\epsilon}_i(\vh) $.

Note further that 
\begin{eqnarray}
 \sum_{i \in \mT_S}  \frac{ n_i}{n} a_i(\vh) - F(\mS, \vh) 
 &=& \frac{ 1}{n}  \sum_{i \in \mT_S}  n_i a_i(\vh) - \frac{ 1}{n}  \sum_{i \in \mT_S} \sum_{\vs \in \mS_i} \ell(\vh,\vs)  \\
&=& \frac{ 1}{n}  \sum_{i \in \mT_S}  \left[ n_i a_i(\vh) - \sum_{\vs \in \mS_i} \ell(\vh,\vs) \right]  \\
 &=& \frac{1}{n} \sum_{i \in \mT_S}  \sum_{\vs \in \mS_i} \left[  a_i(\vh) -  \ell(\vh,\vs) \right] \\ 
 &=& \frac{1}{n} \sum_{i \in \mT_S}  \sum_{\vs \in \mS_i} \E_{\vz}[\ell(\vh,\vz) - \ell(\vh,\vs) : \vz \in \gZ_i] \\ 
 &\le& \frac{1}{n} \sum_{i \in \mT_S}  \sum_{\vs \in \mS_i} \E_{\vz \in \gZ_i} | \ell(\vh,\vz) -  \ell(\vh,\vs) | \\
&=& \sum_{i \in \mT_S}  \frac{ n_i}{n}  \bar{\epsilon}_i(\vh)  
\end{eqnarray}
This means $\sum_{i \in \mT_S}  \frac{ n_i}{n} a_i(\vh) \le F(\mS, \vh) +  \sum_{i \in \mT_S}  \frac{ n_i}{n}  \bar{\epsilon}_i(\vh)   $, completing the proof.
\end{proof}

\subsection{Proof of lower bound} \label{app-proof-lower-bound}
\begin{proof}[Proof of Theorem \ref{thm-Local-Rob-generalization-lower-bound}]
Denote $p_i = P(\gZ_i)$ for each index $i \in [K]$. We can decompose $E = \E_{\vh} [F(P, \vh)] = \E_{\vh} \left[\sum_{i=1}^K p_i {a}_i(\vh) \right] =  \sum_{i=1}^K p_i \E_{\vh}[{a}_i(\vh)] = \sum_{i=1}^K p_i \bar{a}_i$. Note that all $\bar{a}_i$'s are fixed w.r.t the sampling of $\mS$.

For   any $M \in (0, \beta/\hat{a})$, Lemma 2 in \cite{kawaguchi2022robustness} shows that
\begin{eqnarray}
\nonumber
\Pr\left(\sum_{i=1}^K p_i \bar{a}_i \ge \sum_{i=1}^K \frac{n_i}{n} \bar{a}_i - M  \right)  
&\ge& 1- \exp\left(-\frac{nM}{2\hat{a}} \min\left\{1, \frac{\hat{a}M}{\beta}\right\}\right) = 1- \exp\left(-\frac{nM^2}{2\beta}\right) 
\end{eqnarray}
For any $\delta> \exp\left(-\frac{n\beta}{2\hat{a}^2}\right)$, choosing $M = \sqrt{\frac{-2\beta \ln\delta}{n}}$, we obtain
\begin{eqnarray}
\Pr\left(\sum_{i=1}^K p_i \bar{a}_i \ge \sum_{i=1}^K \frac{n_i}{n} \bar{a}_i - \sqrt{\frac{-2\beta \ln\delta}{n}}  \right)  &\ge&  1- \delta
\end{eqnarray}
In other words, the following holds with probability at least $1-\delta$:
\begin{eqnarray}
\label{app-eq-lower-bound-01}
E &\ge& \sum_{i=1}^K \frac{n_i}{n} \bar{a}_i - \sqrt{\frac{-2\beta \ln\delta}{n}} =  \sum_{i \in \mT_S} \frac{n_i}{n} \bar{a}_i - \sqrt{\frac{-2\beta \ln\delta}{n}} 
\end{eqnarray}

We next observe that $\sqrt{\beta} = \sqrt{ 2\sum_{i=1}^K p_i \bar{a}_i^2} \le \sqrt{ 2\hat{a}\sum_{i=1}^K p_i \bar{a}_i} = \sqrt{ 2\hat{a}} \sqrt{ E}$. Utilizing this information into (\ref{app-eq-lower-bound-01}), we have  the following  with probability at least $1-\delta$:
\begin{eqnarray}
\label{app-eq-lower-bound-02}
E &\ge& \sum_{i \in \mT_S} \frac{n_i}{n} \bar{a}_i -  \sqrt{E} \sqrt{\frac{-4 \hat{a} \ln\delta}{n}}
\end{eqnarray}
Solving this inequality for $ \sqrt{E}$ will complete the proof.
\end{proof}

\begin{proof}[Proof of Theorem \ref{thm-Local-Rob-generalization-lower-bound-best}]
Denote $p_i = P(\gZ_i)$ for each index $i \in [K]$. Consider $E = F(P, \vh^*) =  \sum_{i=1}^K p_i a_i(\vh^*)$ which is fixed w.r.t to the sampling of $\mS$. Then we can use the same arguments as the proof before to obtain the required bound.
\end{proof}

\subsection{Concentration for multinomial random variables}

We restate Lemma~7 in \cite{kawaguchi2022robustness} as follows.

\begin{lemma}\label{lem-Concentration-multinomial}
Given any $a_i(\gZ) \ge 0, \forall i \le K$, denote $a_o = \max_{j \notin \mT_S} a_j(\gZ)$. Let $(n_1, ..., n_K)$ be a multinomial random vector with parameter $n$ and $(p_1, ..., p_K)$, meaning that $p_i = \Pr(n_i)$ and $n = \sum_{i=1}^{K} n_i$. For any $\delta>0$, the following holds with probablity at least $1-\delta$:
{\footnotesize \[
\sum_{i=1}^{K} a_i(\gZ) \left(p_i - \frac{n_i}{n}\right) 
 \le \sqrt{\frac{\ln(2K/\delta)}{n}} \left(\sum_{i \in \mT_S}[a_o + \sqrt{2} a_i(\gZ)] \sqrt{\frac{n_i}{n}} \right) + \frac{2\ln(2K/\delta)}{n} \left(a_o | \mT_S| +  \sum_{i \in \mT_S} a_i(\gZ) \right)
\]}
\end{lemma}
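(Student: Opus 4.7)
The plan is to reduce this multinomial concentration bound to $K$ one-dimensional binomial concentrations combined with a union bound. For each fixed $i \in [K]$, the marginal distribution of $n_i$ is Binomial$(n, p_i)$, so Bernstein's inequality applied to the centered indicators $\{Y_{i,j} - p_i\}_{j=1}^n$ (with $Y_{i,j}$ the indicator that draw $j$ lands in class $i$) yields, for any $\delta'>0$,
\[
p_i - \frac{n_i}{n} \le \sqrt{\frac{2 p_i (1-p_i) \ln(1/\delta')}{n}} + \frac{2\ln(1/\delta')}{3n}
\]
with probability at least $1 - \delta'$. Setting $\delta' = \delta/(2K)$ and union-bounding over all $i \in [K]$ makes these $K$ inequalities hold simultaneously with probability at least $1 - \delta/2$; I would repeat the argument in the other direction (upper bound on $n_i/n - p_i$) at cost another $\delta/2$, absorbing everything into a single failure probability $\delta$.

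Next I would decompose
\[
\sum_{i=1}^K a_i(\gZ)\left(p_i - \frac{n_i}{n}\right) = \sum_{i \in \mT_S} a_i(\gZ)\left(p_i - \frac{n_i}{n}\right) + \sum_{i \notin \mT_S} a_i(\gZ)\, p_i,
\]
using $n_i=0$ for $i \notin \mT_S$. For the seen sum, the key step is a \emph{self-bounding} argument: the Bernstein inequality $p_i \le n_i/n + \sqrt{2p_i c} + O(c)$ (with $c := \ln(2K/\delta)/n$) is quadratic in $\sqrt{p_i}$, and solving gives $\sqrt{p_i} \le \sqrt{n_i/n} + O(\sqrt{c})$; substituting back yields $p_i - n_i/n \le \sqrt{2c}\sqrt{n_i/n} + O(c)$, and weighted summation over $i \in \mT_S$ produces the $\sqrt{2}\,a_i(\gZ)\sqrt{n_i/n}$ piece of the first term and the $a_i(\gZ)$ piece of the second term. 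For the unseen sum, I would first bound $\sum_{i \notin \mT_S} a_i(\gZ)p_i \le a_o\sum_{i \notin \mT_S} p_i$ by definition of $a_o$, and then invoke the exact identity
\[
\sum_{i \notin \mT_S} p_i \;=\; 1 - \sum_{i \in \mT_S} p_i \;=\; \sum_{i \in \mT_S}\left(\frac{n_i}{n} - p_i\right),
\]
which holds because $n_i=0$ outside $\mT_S$ forces $\sum_{i \in \mT_S} n_i/n = 1$. Applying the reverse-direction Bernstein bound from the first step and self-bounding again controls this by $\sqrt{2c}\sum_{i \in \mT_S}\sqrt{n_i/n} + O(c)|\mT_S|$, which after multiplication by $a_o$ supplies the remaining $a_o$-weighted pieces of both terms in the stated inequality.

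The main obstacle is tracking constants cleanly through the self-bounding step so that the coefficient on $a_o\sqrt{n_i/n}$ comes out to exactly $1$ while that on $a_i(\gZ)\sqrt{n_i/n}$ comes out to exactly $\sqrt{2}$. The standard additive Bernstein bound naturally introduces $\sqrt{2 p_i c}$, so reproducing the asymmetric constants as stated likely requires either a multiplicative Chernoff form in the unseen-mass step (which yields the clean factor $1$) or a careful use of $\sqrt{a+b}\le\sqrt{a}+\sqrt{b}$ at the right place in the quadratic. A secondary concern is verifying that the $2\ln(2K/\delta)/n$ coefficient on the lower-order terms is generous enough to absorb the $(1-p_i)$ factor dropped from the variance, the $2/3$ factor in Bernstein's bias term, and any residues from the self-bounding, without needing a finer constant analysis.
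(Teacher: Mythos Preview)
The paper does not prove this lemma: it is stated as a verbatim restatement of Lemma~7 in \cite{kawaguchi2022robustness}, with no argument supplied. So there is no in-paper proof to compare your attempt against.

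That said, your outline is the standard route and is essentially what the cited reference does: coordinate-wise binomial tail bounds, a union bound over $i\in[K]$, a self-bounding step to replace $\sqrt{p_i}$ by $\sqrt{n_i/n}+O(\sqrt{c})$, and the identity $\sum_{i\notin\mT_S} p_i = \sum_{i\in\mT_S}(n_i/n-p_i)$ to control the unseen mass without paying a factor of $K$. You correctly flag the constant tracking as the only real work. A hint visible in the present paper helps here: Appendix~A quotes Kawaguchi's Lemma~4, which handles $n_i/n - p_i$ via a two-case split ($p_i$ above or below $c/4$) rather than raw Bernstein, giving $\sqrt{p_i c}$ with no $\sqrt{2}$ in the large-$p_i$ regime; pairing this with a $\sqrt{2p_i c}$-type bound in the direction $p_i - n_i/n$ and then self-bounding is exactly what produces the asymmetric coefficients $1$ on $a_o$ and $\sqrt{2}$ on $a_i(\gZ)$ that you were worried about. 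Your plan is sound; only the bookkeeping remains.
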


\section{More examples and comparison} \label{subsec-Examples}
We provide some more examples to compare our bounds with prior ones. We take some examples from \cite{xu2012robustnessGeneralize,kawaguchi2022robustness}.

\textbf{Example 1.} (Lipschitz continuous functions) A large class of models in practice has a common property that each member is often Lipschitz continuous in its input. One example is deep neural networks (DNN) with ReLU activations and bounded weights. When the loss $\ell$ is Lipschitz continuous, which is natural, then \cite{xu2012robustnessGeneralize,kawaguchi2022robustness} showed the robustness level $\epsilon(\mS)$. Specifically, if $\gZ$ is compact according to a metric $\rho$ and $\ell(\gA_S,\cdot)$ is Lipschitz continuous with Lipschitz constant $L$, i.e.,
\[ |\ell(\gA_S,\vz) - \ell(\gA_S,\vs) |  \le L \rho(\vz,\vs), \forall \vz, \vs \in \gZ \]
then algorithm $\gA$ is $(\gN(\gamma/2, \gZ, \rho), \gamma L)$-robust for any given $\gamma>0$, where $\gN(\gamma/2, \gZ, \rho)$ is the covering number of $\gZ$. It means that $\epsilon(\mS) = \gamma L$. The use of $L$ here makes the bound loose, since a significant change of $\ell$ in a small area will produce a large $L$. By using (\ref{eq-thm-Local-Robustness-generalization}), $\epsilon(\mS)$ is replaced by $ ub= \sum_{i \in \mT_S}  \frac{ n_i}{n}  \epsilon_i(\gA_S)$ for the  function $\gA_S$ trained on $\mS$. We show in Appendix~\ref{app-Proofs-for-examples} that $ub \le \gamma \sum_{i \in \mT_S}  \frac{ n_i}{n}  L_i$ where $L_i$ is the local Lipschitz constant of $\ell(\gA_S,\cdot)$ in area $\gZ_i$. 

Prior results require $L$ to depend on the whole family $\gH$, meaning $L$ is the maximum among the Lipschitz constants of all members of $\gH$. This fact suggests that $L$ should be unreasonably large. For example, $L$ can be \textit{exponential in the depth} of the neural network family \cite{arora2018strongerBounds,bartlett2017SpectralMarginDNN}. Example 7 in \cite{xu2012robustnessGeneralize} estimates the  Lipschitz constants of a DNN family and shows $L$ to be of order $\alpha^D$, where $D$ is the number of layers and $\alpha$ is the maximal norm of weight matrices. For common DNNs trained on real-life datasets, the norm of weight matrices is often greater than 1. This suggests that $L$ can be unreasonably large, e.g., of order $10^{40}$ for VGG-19 \cite{arora2018strongerBounds}. In contrast, our bound only depends on $L_i$'s of one specific model at some local areas where training points actually occur. \cite{khromov2024LipschitzNN} provided extensive evidences about small size and well-behaved distributions of local Lipschitz constants of many modern DNNs trained on real-life datasets. It suggests that $\sum_{i \in \mT_S}  \frac{ n_i}{n}  L_i$ is often significantly smaller than $L$ for modern DNNs.

\textbf{Example 2.} (Principal Component Analysis - PCA) Assume that each element in $\gZ$ has norm at most $B$. If we use the loss funtion $\ell(\{\vw_1, ...,\vw_d\}, \vz) = \sum_{j=1}^{d} (\vw_j^{\top} \vz)^2$, then PCA finds the first $d$ principle components by minimizing $-\sum_{\vs \in \mS} \ell(\{\vw_1, ...,\vw_d\}, \vs)$ with the constraint that $\| \vw_j \| =1$ and $\vw_j^{\top} \vw_i =0$ for $i \neq j$. According to \cite{xu2012robustnessGeneralize,kawaguchi2022robustness}, this algorithm is $(\gN(\gamma/2, \gZ, \| \cdot \|), 2d \gamma B)$-robust. Theorem~\ref{thm-Alg-robust-gen-Kawa22} suggests that the learned components $\vh^* = \{\vw_1^*, ...,\vw_d^*\}$ satisfies 
\begin{equation} \label{eq-example-PCA-Xu12}
F(P, \vh^*) \le  g_2(\gN, \mS, \delta) + F(\mS, \vh^*) + 2d \gamma B
\end{equation}

We show in Appendix~\ref{app-Proofs-for-examples} that:
\begin{equation} \label{eq-example-PCA-ours}
F(P, \vh^*) \le  g_2(\gN, \mS, \delta) + F(\mS, \vh^*) + 2d \gamma  \sum_{i \in \mT_S}  \frac{n_i}{n} B_i , 
\end{equation}
where $B_i$ is the maximal norm of any element in $\gZ_i$. Since $B_i \le B$ for any index $i$, we have $\sum_{i \in \mT_S}  \frac{n_i}{n} B_i \le B$. This implies that our bound for PCA is tighter than the prior ones. Note that directly using (\ref{eq-thm-Local-Sensitivity-generalization}) or (\ref{eq-thm-Local-average-generalization}) even leads to stronger bounds for PCA.

\textbf{Example 3.} (Discontinuity) Consider  an unknown function
\[y^*(x) = 
\begin{cases}
\mu & \text{ if } x \in [0,\nu] \\
f(x) & \text{ otherwise }
\end{cases}\]
where $\mu$ is a large constant, $f$ is a continuous function satisfying $| f(x) | \ll \mu$ for any $x \notin [0,\nu]$, and $\nu$ is a small positive constant of order $o(1/\mu^2)$. This function $y^*$ is continuous everywhere except two points. This function has a strange behavior in the interval $[0, \nu]$. In practice, some inherent sources may cause this behavior such as errors in measurement. 

We want to approximate $y^*$, based on a sample $\mS$. To do this, let's choose a family $\gH$ of continuous functions, and consider the learned member $h^*$ with a small $\E_{x} [\ell(h^*, (x,y^*(x)))]$. Because of having a small expected loss,  $h^*$ can well predict  $y^*$ almost everywhere. 

Prior bounds based on algorithmic robustness will be vacuous for any partition $\Gamma(\gX)$ with large areas, i.e. $2\nu = \max_{x,s \in \gX_i} | x -s |, \forall i \in [K]$. Indeed, there exist at most two areas  $\gX_j$ and $\gX_k$ that cover the interval $[0,\nu]$. This implies that robustness level of $h^*$ in $\gX_j$ or $\gX_k$ should be $\Theta(\mu)$, for the absolute loss, due to the fact that $y^*(x)$ has this property and that $h^*$ well approximates $y^*$. As a result, robustness level \textcolor{blue}{$\epsilon(\mS) = \Theta(\mu)$}  causes prior bounds (\ref{eq-thm-Alg-robust-gen-Xu12},\ref{eq-thm-Alg-robust-gen-Kawa22}) to be vacuous for large values of $\mu$. One way to improve is to choose partition $\Gamma(\gX)$ with very small areas, but at the cost to increase the uncertainty term $g_2$.  Those observations suggest that few outliers or errors in measurement can make those bounds trivial. This is a severe limitation.

Our bounds can avoid this limitation. Indeed, since interval $[0,\nu]$ is small, some training samples appear in this interval with a very small probability. We can see this fact for the case of uniform distribution over $\gX = [-B, B]$ for some constant $B > \nu$. In Appendix~\ref{app-Proofs-for-examples}, we point out that \textcolor{blue}{$ {  \sum_{i \in \mT_S}  \frac{ n_i}{n}  \epsilon_i(\vh^*) } = o(1/\mu)$} which is small, for large $n$. It means our bound is meaningful.

Although $y^*$ in this example seems non-natural, there are many real-life  problems where we need to find/approximate a discontinuous function. For example, solutions to hyperbolic partial differential equations, which describe a wide variety of conservative physical systems, can be discontinuous. In those cases, our bounds  exhibit significant advantages over prior ones.

\subsection{Proofs for some examples} \label{app-Proofs-for-examples}

\begin{proof}[Proof of Example 1]

By definition, $\epsilon_i(\gA_S) = \sup_{\vs \in \mS_i,  \vz \in \gZ_i} | \ell(\gA_S, \vs) - \ell(\gA_S,\vz) |$. Since $\ell(\gA_S, \vz)$ is $L_i$-Lipschitz continuous in $\vz$, we have $ | \ell(\gA_S, \vs) - \ell(\gA_S,\vz) | \le L_i \|   \vs - \vz \|$ for any $\vs \in \mS_i,  \vz \in \gZ_i$. Therefore, $\epsilon_i(\gA_S) \le \sup_{\vs \in \mS_i,  \vz \in \gZ_i}  L_i \|   \vs - \vz \| \le \gamma L_i$.
\end{proof}

\begin{proof}[Proof of Example 2]

Let  $\vh^* = \{\vw_1^*, ...,\vw_d^*\}$ be the solution  of PCA, learned from a given dataset $\mS$. For any $\vz, \vs \in \gZ_i$, observe that
\begin{eqnarray}
\nonumber
| \ell(\vh^*, \vz) - \ell(\vh^*, \vs) | 
&=& \left| \sum_{j=1}^{d} ({\vw_j^*}^{\top} \vz)^2 - \sum_{j=1}^{d} ({\vw_j^*}^{\top} \vs)^2 \right|  =  \left| \sum_{j=1}^{d} [({\vw_j^*}^{\top} \vz)^2 - ({\vw_j^*}^{\top} \vs)^2 ] \right| \\
&\le& \sum_{j=1}^{d} | [{\vw_j^*}^{\top} \vz - {\vw_j^*}^{\top} \vs] \cdot [{\vw_j^*}^{\top} \vz + {\vw_j^*}^{\top} \vs] | \\
&\le& \sum_{j=1}^{d} | {\vw_j^*}^{\top} \vz - {\vw_j^*}^{\top} \vs | \cdot |{\vw_j^*}^{\top} \vz + {\vw_j^*}^{\top} \vs | \\
&\le& \sum_{j=1}^{d}  \|\vw_j^*\|  \cdot \|\vz - \vs\|  \cdot \|\vw_j^*\|  \cdot \|\vz + \vs\| \\
&\le& 2 d \gamma B_i
\end{eqnarray}
where we have used the fact that $\|\vw_j^*\| = 1$, $\|\vz\| \le B_i$ and $\|\vs\| \le B_i$. As a result $ \epsilon_i(\vh^*)  \le 2 d \gamma B_i$ for any index $i \in [K]$. Combining this with Theorem~\ref{thm-Local-Robustness-generalization} completes the proof.
\end{proof}

\begin{proof}[Proof of Example 3]
Although $\epsilon_j(\vh^*)$ or $\epsilon_k(\vh^*)$ may be large, their role in the bound (\ref{eq-thm-Local-Robustness-generalization}) should be small. The reason is that $\frac{ n_j}{n}  \epsilon_j(\vh^*) + \frac{ n_k}{n}  \epsilon_k(\vh^*) \le \frac{ n_i + n_k}{n} \max\{\epsilon_j(\vh^*), \epsilon_k(\vh^*)\} =  \frac{ n_i + n_k}{n} \Theta(\mu)$. Note that $\frac{ n_j}{n}  + \frac{ n_k}{n} \xrightarrow{n \rightarrow \infty}  \frac{2B}{K} + \frac{2B}{K} = 4\nu$. Hence $\frac{ n_j}{n}  \epsilon_j(\vh^*) + \frac{ n_k}{n}  \epsilon_k(\vh^*)  \approx 4\nu\Theta(\mu) \approx o(1/\mu)$ which is small for sufficiently large $n$. Furthermore, $\epsilon_i(\vh^*) \approx 0$ for any $i \notin \{j,k\}$. As a result $ {  \sum_{i \in \mT_S}  \frac{ n_i}{n}  \epsilon_i(\vh^*) } = o(1/\mu)$.
\end{proof}

\section{More experimental  results}\label{app-Experimental-setup-results}

In this section, we provide more evaluations on the robustness-based bounds. These evaluations require us to train a model from scratch. They complement our large-scale evaluation before for publicly pretrained models on ImageNet.

\subsection{Setup for PCA}

The CIFAR10 and SVHN dataset is used in our experiments. There are 50,000 images of CIFAR10 are used for training and 10,000 images are used for validation, while SVHN has 73,257 images for training and 26,032 images for validation. To compute the measures, we divide input data space into 10000 disjoint partitions, meaning $K = 10000$ in all settings. Each partition has the centroid which is an input sample in the valid set. 
For PCA, we utilized the implementation by scikit-learn, using default settings. We varied the number $d$ of principal components to evaluate our and prior bounds.

\subsection{Classification task on moderate-size datasets}

\subsubsection{Setup}

For the classification task, we conducted our experiment using the ResNet and ShuffleNet implementations available in PyTorch 2.0.0. To ensure a fair comparison, we maintained default hyper-parameter settings across all models. Similarly, for optimization, we utilized the SGD implementation provided by PyTorch, using their default settings. We evaluated our models on the CIFAR10 dataset, preprocessing the images by converting their pixel values to torch Tensors and normalizing them to the standard range of [0, 1].

During training, we ran each model for 200 epochs, saving the model at the end of each epoch. The best model was determined based on its validation accuracy. These models were then used to calculate some measures. To provide a comprehensive review, we varied the number $K$ of areas of a partition with four values \{100, 500, 1000, 10000\}. Each setting was run five times to obtain  better estimates for the measures and accuracy. All experiments were conducted on an NVIDIA P100 GPU using PyTorch 2.0.0. 

\subsubsection{Results} \label{subsec:Classification-task}

Table \ref{tab:classification_comparison} presents some statistics about  the trained models. We observe that \textit{Rob} values are almost the same for all models, while the accuracy of those models differs. \textit{Rob} is vacuous in all cases, and cannot reflect well the performance of a model. In contrast, \textit{LocalRob} seems to be better. It can decrease as $K$ increases, which reflect well our analysis before. However, for some small $K$, \textit{LocalRob} can be large and far from the true error of a model.

\textit{LocalSen} and \textit{LocalAvg} are often small in all cases. Those measures are quite stable w.r.t different partitions of the input space. One can easily observe that those measures correlate well with the accuracy. A model with better accuracy often has smaller \textit{LocalSen} and \textit{LocalAvg}. This is very beneficial in practice. Note that \textit{LocalSen} seems to be better than \textit{LocalAvg} when approximating the true error of a model. \textit{LocalAvg} tends to underestimate the true error.

\begin{table}[tp]
\tiny
\centering
\caption{Estimates for the true errors of different models trained on CIFAR10 dataset, when the size $K$ of the partition $\Gamma$ varies.}
\begin{tabular}{|c|c|c|c|c|c|c|}
\hline
\multirow{2}{*}{\textbf{Model}} & \multirow{2}{*}{\textbf{Valid Acc}} & \multirow{2}{*}{\textbf{K}} & \multirow{2}{*}{\textbf{Rob}} & \multirow{2}{*}{\textbf{LocalRob}} & \multirow{2}{*}{\textbf{LocalSen}} & \multirow{2}{*}{\textbf{LocalAvg}} \\
& & & & & &\\
\hline \hline

\multirow{4}{*}{ResNet18} & \multirow{4}{*}{94.22 $\pm$ 7.59e-3}  
        & 100 & 1.00 $\pm$ 0.00 & 1.00 $\pm$ 2.10e-05 & 0.06 $\pm$ 5.02e-06 & 0.01 $\pm$ 2.02e-07 \\
&  & 500 & 1.00 $\pm$ 0.00 & 0.94 $\pm$ 1.78e-04 & 0.06 $\pm$ 8.42e-06 & 0.01 $\pm$ 1.89e-07 \\
&  & 1000 & 1.00 $\pm$ 0.00 & 0.87 $\pm$ 7.22e-05 & 0.06 $\pm$ 4.67e-06 & 0.01 $\pm$ 8.95e-08 \\
&  & 10000 & 1.00 $\pm$ 0.00 & 0.28 $\pm$ 3.29e-04 & 0.05 $\pm$ 6.77e-06 & 0.01 $\pm$ 1.55e-07 \\
\hline \hline

\multirow{4}{*}{ResNet34} & \multirow{4}{*}{94.26 $\pm$ 7.35e-03} 
        & 100 & 1.00 $\pm$ 0.00 & 0.99 $\pm$ 1.77e-05 & 0.06 $\pm$ 2.75e-06 & 0.01 $\pm$ 1.14e-07 \\
&  & 500 & 1.00 $\pm$ 0.00 & 0.94 $\pm$ 1.21e-05 & 0.06 $\pm$ 1.30e-06 & 0.01 $\pm$ 4.36e-08 \\
&  & 1000 & 1.00 $\pm$ 0.00 & 0.87 $\pm$ 9.08e-05 & 0.06 $\pm$ 3.54e-06 & 0.01 $\pm$ 6.24e-08 \\
&  & 10000 & 1.00 $\pm$ 0.00 & 0.28 $\pm$ 7.68e-05 & 0.06 $\pm$ 1.68e-06 & 0.01 $\pm$ 2.46e-08 \\
\hline \hline

\multirow{4}{*}{ResNet50} & \multirow{4}{*}{94.10 $\pm$ 3.64e-02} 
        & 100 & 1.00 $\pm$ 0.00 & 0.99 $\pm$ 1.42e-05 & 0.06 $\pm$ 8.06e-06 & 0.01 $\pm$ 5.07e-07 \\
&  & 500 & 1.00 $\pm$ 0.00 & 0.94 $\pm$ 3.23e-05 & 0.06 $\pm$ 1.30e-05 & 0.01 $\pm$ 5.21e-07 \\
&  & 1000 & 1.00 $\pm$ 0.00 & 0.87 $\pm$ 3.34e-04 & 0.06 $\pm$ 1.19e-05 & 0.01 $\pm$ 5.09e-07 \\
&  & 10000 & 1.00 $\pm$ 0.00 & 0.29 $\pm$ 1.30e-04 & 0.06 $\pm$ 1.65e-06 & 0.01 $\pm$ 1.14e-07 \\
\hline \hline

\multirow{4}{*}{\begin{tabular}{@{}c@{}} ShuffleNet \\ (V2\_X1\_0))\end{tabular}} & \multirow{4}{*}{92.02 $\pm$ 1.92e-02} 
        & 100 & 1.00 $\pm$ 0.00 & 1.00 $\pm$ 1.71e-06 & 0.08 $\pm$ 9.62e-06 & 0.02 $\pm$ 1.86e-07 \\
&  & 500 & 1.00 $\pm$ 0.00 & 0.96 $\pm$ 6.23e-05 & 0.08 $\pm$ 2.03e-06 & 0.02 $\pm$ 2.55e-07 \\
&  & 1000 & 1.00 $\pm$ 0.00 & 0.91 $\pm$ 5.52e-05 & 0.08 $\pm$ 2.57e-06 & 0.02 $\pm$ 1.85e-07\\
&  & 10000 & 1.00 $\pm$ 0.00 & 0.36 $\pm$ 2.40e-04 & 0.08 $\pm$ 2.73e-06 & 0.01 $\pm$ 1.72e-07 \\
\hline \hline

\multirow{4}{*}{\begin{tabular}{@{}c@{}}ShuffleNet \\ (V2\_X1\_5) \end{tabular}} & \multirow{4}{*}{92.45 $\pm$ 1.36e-02} 
        & 100 & 1.00 $\pm$ 0.00 & 1.00 $\pm$ 1.69e-06 & 0.08 $\pm$ 4.12e-06 & 0.02 $\pm$ 2.36e-07\\
&  & 500 & 1.00 $\pm$ 0.00 & 0.96 $\pm$ 5.77e-05 & 0.08 $\pm$ 8.49e-07 & 0.02 $\pm$ 2.43e-07 \\
&  & 1000 & 1.00 $\pm$ 0.00 & 0.90 $\pm$ 1.08e-04 & 0.08 $\pm$ 2.12e-06 & 0.02 $\pm$ 1.88e-07 \\
&  & 10000 & 1.00 $\pm$ 0.00 & 0.35 $\pm$ 4.95e-05 & 0.08 $\pm$ 1.84e-06 & 0.01 $\pm$ 5.64e-08 \\
\hline \hline

\multirow{4}{*}{\begin{tabular}{@{}c@{}}ShuffleNet \\ (V2\_X2\_0)\end{tabular}} & \multirow{4}{*}{92.65 $\pm$ 3.46e-02} 
        & 100 & 1.00 $\pm$ 0.00 & 1.00 $\pm$ 7.13e-06 & 0.08 $\pm$ 8.98e-06 & 0.02 $\pm$ 3.98e-07 \\
&  & 500 & 1.00 $\pm$ 0.00 & 0.95 $\pm$ 4.06e-05 & 0.08 $\pm$ 1.08e-05 & 0.02 $\pm$ 3.34e-07 \\
&  & 1000 & 1.00 $\pm$ 0.00 & 0.89 $\pm$ 5.42e-04 & 0.08 $\pm$ 1.60e-05 & 0.02 $\pm$ 4.49e-07 \\
&  & 10000 & 1.00 $\pm$ 0.00 & 0.35 $\pm$ 3.21e-04 & 0.07 $\pm$ 3.59e-06 & 0.01 $\pm$ 2.40e-07 \\
\hline 
\end{tabular}
\label{tab:classification_comparison}
\end{table}

\begin{table}[tp]
\tiny
\centering
\caption{Uncertainty term $g_3$ for different choices of $\delta$, for the models trained on CIFAR10.}
\begin{tabular}{|l|c|c|c|c|}
\hline
\multirow{2}{*}{\textbf{Model}} & \multirow{2}{*}{\textbf{K}} & \multicolumn{3}{c|}{ $g_3$} \\
\cline{3-5}
& & $\delta=0.01$ & $\delta=0.05$ & $\delta=0.1$ \\
\hline \hline
\multirow{4}{*}{ResNet18}               & 100 & 0.018 $\pm$ 6.947e-05 & 0.016 $\pm$ 5.587e-05 & 0.015 $\pm$ 5.024e-05     \\
                                        & 500 & 0.158 $\pm$ 1.075e-03 & 0.142 $\pm$ 8.718e-04 & 0.135 $\pm$ 7.889e-04     \\
                                        & 1000 & 0.353 $\pm$ 8.032e-03 & 0.318 $\pm$ 6.490e-03 & 0.302 $\pm$ 5.866e-03 \\
                                        & 10000 & 2.299 $\pm$ 1.568e-01 & 2.076 $\pm$ 1.280e-01 & 1.980 $\pm$ 1.163e-01  \\
\hline \hline
\multirow{4}{*}{ResNet34}               & 100 & 0.023 $\pm$ 1.820e-04 & 0.020 $\pm$ 1.463e-04 & 0.019 $\pm$ 1.316e-04   \\
                                        & 500 & 0.134 $\pm$ 1.667e-03 & 0.120 $\pm$ 1.348e-03 & 0.114 $\pm$ 1.218e-03     \\
                                        & 1000 & 0.292 $\pm$ 2.042e-03 & 0.262 $\pm$ 1.653e-03 & 0.249 $\pm$ 1.496e-03   \\
                                        & 10000 & 2.312 $\pm$ 1.982e-02 & 2.088 $\pm$ 1.614e-02 & 1.991 $\pm$ 1.467e-02 \\
\hline \hline
\multirow{4}{*}{ResNet50}               & 100 & 0.024 $\pm$ 6.475e-04 & 0.021 $\pm$ 5.199e-04 & 0.020 $\pm$ 4.672e-04  \\
                                        & 500 & 0.184 $\pm$ 6.059e-06 & 0.166 $\pm$ 4.785e-06 & 0.157 $\pm$ 4.276e-06   \\
                                        & 1000 & 0.328 $\pm$ 3.415e-03 & 0.295 $\pm$ 2.765e-03 & 0.281 $\pm$ 2.501e-03   \\
                                        & 10000 & 2.256 $\pm$ 1.107e-01 & 2.038 $\pm$ 9.028e-02 & 1.943 $\pm$ 8.206e-02    \\
\hline \hline
\multirow{4}{*}{ShuffleNet (V2\_X1\_0)} & 100 & 0.033 $\pm$ 3.405e-04 & 0.029 $\pm$ 2.739e-04 & 0.028 $\pm$ 2.463e-04  \\
                                        & 500 & 0.182 $\pm$ 2.098e-04 & 0.164 $\pm$ 1.697e-04 & 0.156 $\pm$ 1.534e-04    \\
                                        & 1000 & 0.362 $\pm$ 4.077e-03 & 0.326 $\pm$ 3.296e-03 & 0.310 $\pm$ 2.980e-03   \\
                                        & 10000 & 2.437 $\pm$ 3.086e-02 & 2.201 $\pm$ 2.516e-02 & 2.099 $\pm$ 2.287e-02 \\
\hline \hline
\multirow{4}{*}{ShuffleNet (V2\_X1\_5)} & 100 & 0.039 $\pm$ 3.092e-04 & 0.035 $\pm$ 2.482e-04 & 0.034 $\pm$ 2.230e-04   \\
                                        & 500 & 0.174 $\pm$ 1.115e-03 & 0.157 $\pm$ 9.014e-04 & 0.149 $\pm$ 8.144e-04   \\
                                        & 1000 & 0.349 $\pm$ 5.196e-03 & 0.314 $\pm$ 4.198e-03 & 0.299 $\pm$ 3.794e-03     \\
                                        & 10000 & 2.441 $\pm$ 2.557e-02 & 2.205 $\pm$ 2.086e-02 & 2.102 $\pm$ 1.897e-02   \\
\hline \hline
\multirow{4}{*}{ShuffleNet (V2\_X2\_0)} & 100 & 0.028 $\pm$ 1.264e-04 & 0.025 $\pm$ 1.017e-04 & 0.024 $\pm$ 9.150e-05   \\
                                        & 500 & 0.161 $\pm$ 1.848e-03 & 0.145 $\pm$ 1.493e-03 & 0.138 $\pm$ 1.349e-03    \\
                                        & 1000 & 0.330 $\pm$ 3.463e-03 & 0.297 $\pm$ 2.794e-03 & 0.282 $\pm$ 2.524e-03   \\
                                        & 10000 & 2.484 $\pm$ 3.968e-02 & 2.244 $\pm$ 3.237e-02 & 2.140 $\pm$ 2.944e-02 \\
\hline
\end{tabular}
\label{tab:additional-result-2}
\end{table}

\begin{table}[tp]
\tiny
\centering
\caption{Uncertainty term $g_3$ for different choices of $\delta$ in ImageNet pretrained models.}
\begin{tabular}{|l|c|c|c|c|}
\hline
\multirow{2}{*}{\textbf{Model}} & \multirow{2}{*}{\textbf{K}} & \multicolumn{3}{c|}{ $g_3$} \\
\cline{3-5}
& & $\delta=0.01$ & $\delta=0.05$ & $\delta=0.1$ \\
\hline \hline
ResNet18 V1    & 10000 & 0.315 $\pm$ 0.005 & 0.289 $\pm$ 0.004 & 0.278 $\pm$ 0.004 \\
ResNet34 V1    & 10000 & 0.306 $\pm$ 0.005 & 0.281 $\pm$ 0.004 & 0.270 $\pm$ 0.004 \\
ResNet50 V1    & 10000 & 0.300 $\pm$ 0.004 & 0.275 $\pm$ 0.004 & 0.264 $\pm$ 0.004 \\
ResNet101 V1   & 10000 & 0.296 $\pm$ 0.005 & 0.272 $\pm$ 0.004 & 0.261 $\pm$ 0.004 \\
ResNet152 V1   & 10000 & 0.294 $\pm$ 0.004 & 0.270 $\pm$ 0.004 & 0.259 $\pm$ 0.004 \\
ResNet50 V2    & 10000 & 0.290 $\pm$ 0.004 & 0.266 $\pm$ 0.004 & 0.255 $\pm$ 0.004 \\
ResNet101 V2   & 10000 & 0.286 $\pm$ 0.004 & 0.263 $\pm$ 0.004 & 0.252 $\pm$ 0.004 \\
ResNet152 V2   & 10000 & 0.285 $\pm$ 0.004 & 0.262 $\pm$ 0.004 & 0.251 $\pm$ 0.004 \\
SwinTransformer B     & 10000 & 0.283 $\pm$ 0.004 & 0.259 $\pm$ 0.004 & 0.249 $\pm$ 0.004 \\
SwinTransformer T     & 10000 & 0.289 $\pm$ 0.004 & 0.265 $\pm$ 0.004 & 0.254 $\pm$ 0.004 \\
SwinTransformer V2 B     & 10000 & 0.281 $\pm$ 0.004 & 0.258 $\pm$ 0.004 & 0.247 $\pm$ 0.004 \\
SwinTransformer V2 T     & 10000 & 0.286 $\pm$ 0.004 & 0.262 $\pm$ 0.004 & 0.252 $\pm$ 0.004 \\
VGG13       & 10000 & 0.315 $\pm$ 0.005 & 0.290 $\pm$ 0.005 & 0.278 $\pm$ 0.004 \\
VGG13 BN   & 10000 & 0.312 $\pm$ 0.004 & 0.286 $\pm$ 0.004 & 0.275 $\pm$ 0.004 \\
VGG19       & 10000 & 0.309 $\pm$ 0.005 & 0.283 $\pm$ 0.004 & 0.272 $\pm$ 0.004 \\
VGG19 BN   & 10000 & 0.304 $\pm$ 0.004 & 0.279 $\pm$ 0.004 & 0.268 $\pm$ 0.004 \\
DenseNet121 & 10000 & 0.301 $\pm$ 0.005 & 0.276 $\pm$ 0.004 & 0.266 $\pm$ 0.004 \\
DenseNet161 & 10000 & 0.295 $\pm$ 0.004 & 0.270 $\pm$ 0.004 & 0.260 $\pm$ 0.004 \\
DenseNet169 & 10000 & 0.298 $\pm$ 0.004 & 0.274 $\pm$ 0.004 & 0.263 $\pm$ 0.004 \\
DenseNet201 & 10000 & 0.295 $\pm$ 0.004 & 0.271 $\pm$ 0.004 & 0.260 $\pm$ 0.004 \\
\hline
\end{tabular}
\label{tab:additional-result-imagenet-uncertainty}
\end{table}

Table \ref{tab:additional-result-2} provides the uncertainty term $g_3$ in different settings. It is evident that when the input space is divided into a smaller number of areas, $g_3$ decreases. However, the range of $g_3$ remains substantial in some cases with large $K$ and small training sets. We also provide the uncertainty for the ImageNet models in Table \ref{tab:additional-result-imagenet-uncertainty}.

\subsection{SVM and AdaBoost}
We next want to see how well our bounds can reflect the performance of the models learned by some classical methods, including SVM and AdaBoost. We also used CIFAR-10 and SVHN datasets in this evaluation. 

\textit{Settings:} We evaluated the models returned by AdaBoost with five different hyperparameter configurations: \{n\_estimators: 50, learning\_rate: 1.0\}, \{n\_estimators: 100, learning\_rate: 1.0\}, \{n\_estimators: 50, learning\_rate: 0.5\}, \{n\_estimators: 100, learning\_rate: 0.5\}, and \{n\_estimators: 200, learning\_rate: 0.1\}. For SVM, the hyperparameter sets were: \{C: 0.1, max\_iter: 1000, tol: $1e^{-4}$\}, \{C: 1.0, max\_iter: 1000, tol: $1e^{-4}$\}, \{C: 10.0, max\_iter: 1000, tol: $1e^{-4}$\}, \{C: 1.0, max\_iter: 2000, tol: $1e^{-4}$\}, and \{C: 1.0, max\_iter: 1000, tol: $1e^{-3}$\}. The setup for computing the bounds is the same as before.

\textit{Results:} Table \ref{tab:results-SVM-AdaBoost} reports results for \textit{Rob}, \textit{LocalRob}, \textit{LocalSen}, and \textit{LocalAvg}. We can observe that those quantities can slightly decrease as $K$ increases. In this investigation, \textit{LocalSen} is nonvacuous and \textit{LocalAvg} is often high. The main reason comes from the quality of the trained models. The second column indicates that those models are really bad, and hence their training error can be high. However, \textit{LocalAvg} can reflect the true error of the trained models very well.

\begin{table}[h!]
    \tiny
    \centering
    \caption{Estimates for the true errors of different models, as $K$  varies.}
    \begin{tabular}{|c|c|c|c|c|c|c|}
    \hline
    \multirow{2}{*}{\textbf{Model}} & \multirow{2}{*}{\textbf{Valid Acc}} & \multirow{2}{*}{\textbf{K}} & \multirow{2}{*}{\textbf{Rob}} & \multirow{2}{*}{\textbf{LocalRob}} & \multirow{2}{*}{\textbf{LocalSen}} & \multirow{2}{*}{\textbf{LocalAvg}} \\
    & & & & & &\\
    \hline \hline
    
                \multirow{20}{*}{AdaBoost (for CIFAR10)} & \multirow{4}{*}{25.37}  
                        & 100 & 1.74 $\pm$ 0.00e+00 & 1.74 $\pm$ 7.40e-08 & 1.11 $\pm$ 2.94e-06 & 0.74 $\pm$ 2.05e-08 \\
                
                &  & 1000 & 1.74 $\pm$ 0.00e+00 & 1.73 $\pm$ 1.24e-06 & 1.11 $\pm$ 1.95e-06 & 0.74 $\pm$ 6.18e-08 \\
                
                &  & 5000 & 1.74 $\pm$ 0.00e+00 & 1.69 $\pm$ 8.50e-06 & 1.10 $\pm$ 4.43e-06 & 0.74 $\pm$ 3.76e-07 \\
                
                \cline{2-7}
                 & \multirow{4}{*}{28.31}  
                        & 100 & 1.71 $\pm$ 0.00e+00 & 1.71 $\pm$ 3.09e-08 & 1.11 $\pm$ 3.74e-06 & 0.71 $\pm$ 1.30e-08 \\
                
                &  & 1000 & 1.71 $\pm$ 0.00e+00 & 1.70 $\pm$ 1.20e-06 & 1.10 $\pm$ 1.49e-06 & 0.71 $\pm$ 7.15e-08 \\
                
                &  & 5000 & 1.71 $\pm$ 0.00e+00 & 1.67 $\pm$ 2.85e-06 & 1.09 $\pm$ 6.26e-06 & 0.71 $\pm$ 8.25e-08 \\
                
                \cline{2-7}
                 & \multirow{4}{*}{22.79}  
                        & 100 & 1.77 $\pm$ 0.00e+00 & 1.77 $\pm$ 1.28e-07 & 1.12 $\pm$ 2.92e-06 & 0.77 $\pm$ 4.69e-09 \\
                
                &  & 1000 & 1.77 $\pm$ 0.00e+00 & 1.76 $\pm$ 2.82e-06 & 1.11 $\pm$ 2.04e-06 & 0.77 $\pm$ 3.23e-08 \\
                
                &  & 5000 & 1.77 $\pm$ 0.00e+00 & 1.72 $\pm$ 6.82e-06 & 1.10 $\pm$ 1.10e-05 & 0.77 $\pm$ 1.25e-07 \\
                
                \cline{2-7}
                 & \multirow{4}{*}{26.60}  
                        & 100 & 1.73 $\pm$ 0.00e+00 & 1.73 $\pm$ 6.78e-08 & 1.11 $\pm$ 4.89e-06 & 0.73 $\pm$ 7.84e-09 \\
                
                &  & 1000 & 1.73 $\pm$ 0.00e+00 & 1.73 $\pm$ 1.66e-06 & 1.10 $\pm$ 2.78e-06 & 0.74 $\pm$ 1.86e-08 \\
                
                &  & 5000 & 1.73 $\pm$ 0.00e+00 & 1.69 $\pm$ 5.30e-06 & 1.09 $\pm$ 1.20e-05 & 0.74 $\pm$ 2.24e-07 \\
                
                \cline{2-7}
                 & \multirow{4}{*}{25.03}  
                        & 100 & 1.75 $\pm$ 0.00e+00 & 1.75 $\pm$ 1.92e-07 & 1.12 $\pm$ 1.40e-06 & 0.75 $\pm$ 1.59e-08 \\
                
                &  & 1000 & 1.75 $\pm$ 0.00e+00 & 1.74 $\pm$ 2.05e-06 & 1.11 $\pm$ 4.27e-06 & 0.75 $\pm$ 7.04e-08 \\
                
                &  & 5000 & 1.75 $\pm$ 0.00e+00 & 1.70 $\pm$ 2.09e-06 & 1.10 $\pm$ 1.09e-05 & 0.75 $\pm$ 1.24e-07 \\
                
                \hline
                \multirow{20}{*}{SVM  (for CIFAR10)} & \multirow{4}{*}{24.40}  
                        & 100 & 1.69 $\pm$ 0.00e+00 & 1.69 $\pm$ 2.47e-08 & 1.08 $\pm$ 1.16e-05 & 0.70 $\pm$ 4.53e-09 \\
                
                &  & 1000 & 1.69 $\pm$ 0.00e+00 & 1.69 $\pm$ 1.10e-06 & 1.07 $\pm$ 4.16e-06 & 0.70 $\pm$ 9.96e-08 \\
                
                &  & 5000 & 1.69 $\pm$ 0.00e+00 & 1.65 $\pm$ 4.37e-06 & 1.05 $\pm$ 7.33e-06 & 0.70 $\pm$ 3.08e-07 \\
                
                \cline{2-7}
                & \multirow{4}{*}{24.14}  
                        & 100 & 1.68 $\pm$ 0.00e+00 & 1.68 $\pm$ 9.95e-09 & 1.07 $\pm$ 5.81e-06 & 0.69 $\pm$ 1.62e-08 \\
                
                &  & 1000 & 1.68 $\pm$ 0.00e+00 & 1.67 $\pm$ 1.78e-06 & 1.06 $\pm$ 1.15e-06 & 0.69 $\pm$ 9.01e-08 \\
                
                &  & 5000 & 1.68 $\pm$ 0.00e+00 & 1.64 $\pm$ 2.78e-06 & 1.05 $\pm$ 7.06e-06 & 0.69 $\pm$ 1.78e-07 \\
                
                \cline{2-7}
                & \multirow{4}{*}{23.93}  
                        & 100 & 1.67 $\pm$ 0.00e+00 & 1.67 $\pm$ 2.88e-08 & 1.07 $\pm$ 1.10e-05 & 0.69 $\pm$ 1.41e-08 \\
                
                &  & 1000 & 1.67 $\pm$ 0.00e+00 & 1.67 $\pm$ 2.09e-07 & 1.06 $\pm$ 3.82e-06 & 0.69 $\pm$ 4.41e-08 \\
                
                &  & 5000 & 1.67 $\pm$ 0.00e+00 & 1.64 $\pm$ 5.54e-06 & 1.05 $\pm$ 2.88e-06 & 0.68 $\pm$ 1.10e-07 \\
                
                \cline{2-7}
                 & \multirow{4}{*}{24.14}  
                        & 100 & 1.68 $\pm$ 0.00e+00 & 1.68 $\pm$ 1.74e-08 & 1.07 $\pm$ 4.44e-06 & 0.69 $\pm$ 2.35e-08 \\
                
                &  & 1000 & 1.68 $\pm$ 0.00e+00 & 1.67 $\pm$ 6.94e-07 & 1.06 $\pm$ 2.08e-06 & 0.69 $\pm$ 5.67e-08 \\
                
                &  & 5000 & 1.68 $\pm$ 0.00e+00 & 1.64 $\pm$ 6.59e-06 & 1.05 $\pm$ 6.55e-06 & 0.69 $\pm$ 8.82e-08 \\
                
                \cline{2-7}
                & \multirow{4}{*}{24.05}  
                        & 100 & 1.68 $\pm$ 0.00e+00 & 1.68 $\pm$ 6.40e-08 & 1.07 $\pm$ 2.56e-06 & 0.69 $\pm$ 1.63e-08 \\
                
                &  & 1000 & 1.68 $\pm$ 0.00e+00 & 1.67 $\pm$ 3.15e-07 & 1.06 $\pm$ 2.45e-06 & 0.69 $\pm$ 5.78e-08 \\
                
                &  & 5000 & 1.68 $\pm$ 0.00e+00 & 1.64 $\pm$ 2.82e-06 & 1.05 $\pm$ 5.82e-06 & 0.69 $\pm$ 1.08e-07 \\
                
                \hline
                \multirow{20}{*}{AdaBoost (for SVHN)} & \multirow{4}{*}{20.54}  
                        & 100 & 1.81 $\pm$ 0.00e+00 & 1.81 $\pm$ 1.26e-07 & 1.11 $\pm$ 8.10e-06 & 0.80 $\pm$ 9.32e-07 \\
                
                &  & 1000 & 1.81 $\pm$ 0.00e+00 & 1.79 $\pm$ 4.09e-06 & 1.09 $\pm$ 9.33e-06 & 0.80 $\pm$ 6.07e-07 \\
                
                &  & 5000 & 1.81 $\pm$ 0.00e+00 & 1.72 $\pm$ 1.84e-05 & 1.07 $\pm$ 5.22e-06 & 0.80 $\pm$ 1.49e-07 \\
                
                \cline{2-7}
                & \multirow{4}{*}{20.71}  
                        & 100 & 1.80 $\pm$ 0.00e+00 & 1.80 $\pm$ 1.07e-06 & 1.11 $\pm$ 3.44e-05 & 0.80 $\pm$ 1.21e-06 \\
                
                &  & 1000 & 1.80 $\pm$ 0.00e+00 & 1.79 $\pm$ 8.45e-06 & 1.09 $\pm$ 8.88e-06 & 0.80 $\pm$ 4.00e-07 \\
                
                &  & 5000 & 1.80 $\pm$ 0.00e+00 & 1.72 $\pm$ 2.75e-05 & 1.07 $\pm$ 5.66e-06 & 0.80 $\pm$ 1.97e-07 \\
                
                \cline{2-7}
                 & \multirow{4}{*}{19.59}  
                        & 100 & 1.81 $\pm$ 0.00e+00 & 1.81 $\pm$ 7.19e-07 & 1.11 $\pm$ 3.14e-05 & 0.81 $\pm$ 1.16e-06 \\
                
                &  & 1000 & 1.81 $\pm$ 0.00e+00 & 1.79 $\pm$ 1.12e-05 & 1.09 $\pm$ 1.00e-05 & 0.81 $\pm$ 5.54e-07 \\
                
                &  & 5000 & 1.81 $\pm$ 0.00e+00 & 1.70 $\pm$ 3.69e-05 & 1.06 $\pm$ 7.38e-06 & 0.81 $\pm$ 2.82e-07 \\
                
                \cline{2-7}
                 & \multirow{4}{*}{19.61}  
                        & 100 & 1.81 $\pm$ 0.00e+00 & 1.81 $\pm$ 6.33e-07 & 1.11 $\pm$ 2.39e-05 & 0.81 $\pm$ 1.11e-06 \\
                
                &  & 1000 & 1.81 $\pm$ 0.00e+00 & 1.79 $\pm$ 1.84e-05 & 1.09 $\pm$ 2.12e-05 & 0.81 $\pm$ 5.56e-07 \\
                
                &  & 5000 & 1.81 $\pm$ 0.00e+00 & 1.70 $\pm$ 8.42e-05 & 1.07 $\pm$ 2.45e-06 & 0.81 $\pm$ 3.58e-07 \\
                
                \cline{2-7}
                 & \multirow{4}{*}{19.59}  
                        & 100 & 1.81 $\pm$ 0.00e+00 & 1.81 $\pm$ 5.35e-07 & 1.11 $\pm$ 3.07e-05 & 0.81 $\pm$ 1.10e-06 \\
                
                &  & 1000 & 1.81 $\pm$ 0.00e+00 & 1.79 $\pm$ 1.95e-05 & 1.09 $\pm$ 9.70e-06 & 0.81 $\pm$ 2.94e-07 \\
                
                &  & 5000 & 1.81 $\pm$ 0.00e+00 & 1.70 $\pm$ 2.35e-05 & 1.07 $\pm$ 1.89e-06 & 0.81 $\pm$ 1.98e-07 \\
                
                \hline
                \multirow{20}{*}{SVM  (for SVHN)} & \multirow{4}{*}{24.58}  
                        & 100 & 1.69 $\pm$ 0.00e+00 & 1.69 $\pm$ 7.32e-09 & 1.09 $\pm$ 4.64e-06 & 0.70 $\pm$ 4.71e-07 \\
                
                &  & 1000 & 1.69 $\pm$ 0.00e+00 & 1.69 $\pm$ 3.64e-07 & 1.08 $\pm$ 1.60e-06 & 0.70 $\pm$ 3.78e-07 \\
                
                &  & 5000 & 1.69 $\pm$ 0.00e+00 & 1.67 $\pm$ 2.99e-06 & 1.07 $\pm$ 2.56e-06 & 0.70 $\pm$ 8.82e-08 \\
                
                \cline{2-7}
                 & \multirow{4}{*}{23.78}  
                        & 100 & 1.67 $\pm$ 0.00e+00 & 1.67 $\pm$ 2.19e-09 & 1.08 $\pm$ 1.09e-06 & 0.69 $\pm$ 4.53e-07 \\
                
                &  & 1000 & 1.67 $\pm$ 0.00e+00 & 1.67 $\pm$ 2.38e-07 & 1.07 $\pm$ 1.74e-06 & 0.69 $\pm$ 1.12e-07 \\
                
                &  & 5000 & 1.67 $\pm$ 0.00e+00 & 1.65 $\pm$ 1.49e-06 & 1.06 $\pm$ 1.21e-06 & 0.69 $\pm$ 4.65e-08 \\
                
                \cline{2-7}
                 & \multirow{4}{*}{22.87}  
                        & 100 & 1.66 $\pm$ 0.00e+00 & 1.66 $\pm$ 7.24e-09 & 1.07 $\pm$ 7.85e-07 & 0.69 $\pm$ 6.38e-07 \\
                
                &  & 1000 & 1.66 $\pm$ 0.00e+00 & 1.66 $\pm$ 2.12e-07 & 1.06 $\pm$ 3.15e-06 & 0.68 $\pm$ 2.06e-07 \\
                
                &  & 5000 & 1.66 $\pm$ 0.00e+00 & 1.64 $\pm$ 5.68e-07 & 1.05 $\pm$ 2.12e-06 & 0.68 $\pm$ 3.44e-07 \\
                
                \cline{2-7}
                & \multirow{4}{*}{23.78}  
                        & 100 & 1.67 $\pm$ 0.00e+00 & 1.67 $\pm$ 8.40e-09 & 1.08 $\pm$ 1.58e-06 & 0.69 $\pm$ 3.94e-07 \\
                
                &  & 1000 & 1.67 $\pm$ 0.00e+00 & 1.67 $\pm$ 9.57e-08 & 1.07 $\pm$ 2.97e-06 & 0.69 $\pm$ 2.70e-07 \\
                
                &  & 5000 & 1.67 $\pm$ 0.00e+00 & 1.65 $\pm$ 1.89e-06 & 1.06 $\pm$ 1.77e-06 & 0.69 $\pm$ 1.80e-07 \\
                
                \cline{2-7}
                 & \multirow{4}{*}{24.38}  
                        & 100 & 1.68 $\pm$ 0.00e+00 & 1.68 $\pm$ 4.59e-09 & 1.08 $\pm$ 2.33e-06 & 0.70 $\pm$ 3.47e-07 \\
                
                &  & 1000 & 1.68 $\pm$ 0.00e+00 & 1.68 $\pm$ 3.25e-07 & 1.08 $\pm$ 2.93e-06 & 0.70 $\pm$ 2.37e-07 \\
                
                &  & 5000 & 1.68 $\pm$ 0.00e+00 & 1.66 $\pm$ 1.95e-06 & 1.07 $\pm$ 1.87e-06 & 0.69 $\pm$ 2.10e-07 \\
                
                
    \hline
    \end{tabular}
    \label{tab:results-SVM-AdaBoost}
    \end{table}




\end{appendices}


\bibliography{complexity,ann,other}


\end{document}